\documentclass{article}



    \usepackage[final,nonatbib]{neurips_2020}



\usepackage[utf8]{inputenc} 
\usepackage[T1]{fontenc}    
\usepackage{url}            
\usepackage{booktabs}       
\usepackage{amsfonts}       
\usepackage{nicefrac}       
\usepackage{microtype}      
\usepackage{enumitem}
\usepackage{color}
\usepackage[dvipsnames]{xcolor}
\usepackage{capt-of}
\usepackage{wrapfig}
\usepackage{multirow}
\usepackage{subcaption}
\usepackage{algorithm}
\usepackage{algorithmic}
\captionsetup[table]{aboveskip=3pt}
\captionsetup[table]{belowskip=3pt}
\usepackage{hyperref}
\hypersetup{colorlinks = true, linkcolor = NavyBlue,
            urlcolor  = gray,
            citecolor = NavyBlue,
            anchorcolor = NavyBlue}
\title{Multi-label Contrastive Predictive Coding}

%

\author{%
Jiaming Song \\
Stanford University \\
\texttt{tsong@cs.stanford.edu} \\
\And 
Stefano Ermon \\
Stanford University \\
\texttt{ermon@cs.stanford.edu}
}


\usepackage{amsmath,amsfonts,bm,amssymb,mathtools,amsthm}
\usepackage{color,xcolor,xspace}
\usepackage{booktabs}
\usepackage{thm-restate}

\newtheorem{proposition}{Proposition}
\newtheorem{lemma}{Lemma}

\newtheorem{example}{Example}


\newcommand{\bb}[1]{{\mathbb{#1}}}

\newcommand{\diff}{\mathrm{d}}







\def\eqref#1{Eq.(\ref{#1})}
\def\Eqref#1{Equation~(\ref{#1})}








\def\1{\bm{1}}








\def\vx{{\bm{x}}}
\def\vy{{\bm{y}}}



\DeclareMathAlphabet{\mathsfit}{\encodingdefault}{\sfdefault}{m}{sl}
\SetMathAlphabet{\mathsfit}{bold}{\encodingdefault}{\sfdefault}{bx}{n}


\def\gD{{\mathcal{D}}}

\def\gO{{\mathcal{O}}}
\def\gP{{\mathcal{P}}}

\def\gX{{\mathcal{X}}}
\def\gY{{\mathcal{Y}}}










\newcommand{\E}{\mathbb{E}}

\newcommand{\R}{\mathbb{R}}

\newcommand{\KL}{D_{\mathrm{KL}}}



\newcommand{\fij}{g(\vx_i,\overline{\vy_{i,j}})}
\newcommand{\fjk}{g(\vx_j,\overline{\vy_{j,k}})}
\newcommand{\fii}{g(\vx_i,\vy_i)}
\newcommand{\fjj}{g(\vx_j,\vy_j)}
\newcommand{\xiyi}{(\vx_i, \vy_i)}
\newcommand{\xiyj}{(\vx_i, \overline{\vy_{i,j}})}
\newcommand{\pxy}[1]{p^{#1}(\vx, \vy)}
\newcommand{\py}[1]{p^{#1}(\vy)}
\newcommand{\px}[1]{p^{#1}(\vx)}
\newcommand{\pxpy}[1]{p^{#1}(\vx)p^{#1}(\vy)}

\newcommand{\rjk}{r(\vy_{j,k})}
\newcommand{\rii}{r(\vx_i)}
\newcommand{\rjj}{r(\vx_j)}

\begin{document}

\maketitle

\begin{abstract}
Variational mutual information (MI) estimators are widely used in unsupervised representation learning methods such as contrastive predictive coding (CPC). A lower bound on MI can be obtained from a multi-class classification problem, where a critic attempts to distinguish a positive sample drawn from the underlying joint distribution from $(m-1)$ negative samples drawn from a suitable proposal distribution. Using this approach, MI estimates are bounded above by $\log m$, and could thus severely underestimate unless $m$ is very large. To overcome this limitation, we introduce a novel estimator based on a multi-label classification problem, where the critic needs to jointly identify \emph{multiple} positive samples at the same time. We show that using the same amount of negative samples, multi-label CPC is able to exceed the $\log m$ bound, while still being a valid lower bound of mutual information. We demonstrate that the proposed approach is able to lead to better mutual information estimation, gain empirical improvements in unsupervised representation learning, and beat a current state-of-the-art knowledge distillation method over 10 out of 13 tasks. 
\end{abstract}
\section{Introduction}

Learning efficient representations from data with minimal supervision is a critical problem in machine learning with significant practical impact~\cite{mikolov2013distributed,devlin2018bert,mnih2013learning,radford2018improving,brown2020language}. 
Representations obtained using large amounts of unlabeled data can boost performance on downstream tasks where labeled data is scarce. 
This paradigm is already successful in a variety of domains; for example, 
representations trained on large amounts of unlabeled images can be used to improve performance on detection~\cite{zhuang2019local,he2019momentum,chen2020a}.

In the context of learning visual representations, contrastive objectives based on variational mutual information (MI) estimation 
are among the most successful ones
\cite{oord2018representation,belghazi2018mine,devon2018learning,poole2019on,tian2019contrastive}. 
One such approach, named Contrastive Predictive Coding (CPC, \cite{oord2018representation}), obtains a lower bound to MI via a multi-class classification problem. 
In CPC, a critic is generally trained to distinguish a pair of representations from two augmentations of the same image (positive), apart from $(m-1)$ pairs of representations from different images (negative). 
The representation network is then trained to increase the MI estimates given by the critic.
This brings together the two representations from the positive pair and pushes apart the two representations from the negative pairs. 

It has been empirically observed that factors leading to better MI estimates, such as training for more iterations and increasing the complexity of the critic \cite{chen2020a,chen2020improved}, can generally result in improvements over downstream tasks. In the context of CPC, increasing the number of negative samples per positive sample (i.e. increasing $m$) also helps with downstream performance~\cite{wu2018unsupervised,he2019momentum,chen2020a,tian2019contrastive}. This can be explained from a mutual information estimation perspective that CPC estimates are upper bounded by $\log m$, so increasing $m$ could reduce bias when the actual mutual information is much higher~\cite{ma2018noise}. However, due to constraints over compute, memory and data, there is a limit to how many negative samples we can obtain per positive sample. 

In this paper, we propose generalizations to CPC that can increase the $\log m$ bound without additional computational costs, thus decreasing bias. 
We first generalize CPC through by re-weighting the influence of positive and negative samples in the underlying the classification problem.
This increases the $\log m$ bound and leads to bias reduction, yet the re-weighted CPC objective is no longer guaranteed to be a lower bound to mutual information.

To this end, we introduce multi-label CPC (ML-CPC) which poses mutual information estimation as a multi-label classification problem. 
Instead of identifying one positive sample for each classification task (as in CPC), the critic now simultaneously identifies multiple positive samples that come from the same batch. 
We prove for ML-CPC that under certain choices of the weights, we can increase the $\log m$ bound and reduce bias, while guaranteeing that the new objective is still lower bounded by mutual information. This provides an practical algorithm whose upper bound is close to the theoretical upper limit by any distribution-free, high-confidence lower bound estimators of mutual information~\cite{mcallester2020formal}.

Re-weighted ML-CPC encompasses a range of mutual information lower bound estimators with different bias-variance trade-offs, which can be chosen with minimal impact on the computational costs.
We demonstrate the effectiveness of re-weighted ML-CPC over CPC empirically on several tasks, including mutual information estimation, knowledge distillation and unsupervised representation learning. In particular, ML-CPC is able to beat the current state-of-the-art method in knowledge distillation \cite{tian2019contrastive} on 10 out of 13 distillation tasks for CIFAR-100.

\section{Contrastive Predictive Coding and Mutual Information}
In representation learning, we are interested in learning a (possibly stochastic) network $h: \gX \to \gY$ that maps some data $\vx \in \gX$ to a compact representation $h(\vx) \in \gY$. 
For ease of notation, we denote $\px{}$ as the data distribution, $\pxy{}$ as the joint distribution for data and representations (denoted as $\vy$), $\py{}$ as the marginal distribution of the representations, and $X, Y$ as the random variables associated with data and representations. The InfoMax principle \cite{linsker1988self,bell1995information,devon2018learning} for learning representations considers variational maximization of the mutual information $I(X; Y)$:
\begin{align}
    I(X; Y) := \E_{(\vx, \vy) \sim \pxy{}}\left[\log \frac{\pxy{}}{\px{}\py{}}\right]
\end{align}

A variety of mutual information estimators with different bias-variance trade-offs have been proposed for representation learning \cite{nguyen2008estimating,oord2016conditional,belghazi2018mine,poole2019on}.
Contrastive predictive coding (CPC, also known as InfoNCE~\cite{oord2018representation}), poses the MI estimation problem as an $m$-class classification problem. Here, the goal is to distinguish a \textit{positive} pair $(\vx, \vy) \sim \pxy{}$ from $(m-1)$ \textit{negative} pairs $(\vx, \overline{\vy}) \sim \pxpy{}$. If the optimal classifier is able to distinguish positive and negative pairs easily, 
it means 
$\vx$ and $\vy$ are tied to each other, 
indicating high mutual information.

For a batch of $n$ positive pairs $\{(\vx_i, \vy_i)\}_{i=1}^{n}$, the CPC objective is defined as\footnote{We suppress the dependencies on $n$ and $m$ in $L(g)$ (and in subsequent objectives) for conciseness.}:
\begin{align}
    L(g) := \mathop{\E}\Bigg[\frac{1}{n} \sum_{i=1}^n \log{\frac{m \cdot \fii}{\fii + \sum_{j=1}^{m-1} \fij}}\Bigg] 
    \label{eq:cpc}
\end{align}
for some positive critic function $g: \gX \times \gY \to \R_{+}$, where the expectation is taken over $n$ positive pairs $\xiyi \sim \pxy{}$ and $n (m-1)$ negative pairs $\xiyj \sim \pxpy{}$.

\subsection{CPC is a lower bound to mutual information}
Oord et al.~\cite{oord2018representation} interpreted the CPC objective as a lower bound to MI, but only proved the case for a lower bound approximation of CPC, where a term containing $-\E[\log g]$ is replaced by $-\log\E[g]$; 
so their arguments alone cannot prove that CPC is a lower bound of mutual information. Poole et al.~\cite{poole2019on} proved a lower bound argument for the objective where $m = n$ and negative samples are tied to other positive samples in the same batch. To bridge the gap between theory (that CPC instantiates InfoMax) and practice (where negative samples can be chosen independently from positive samples of the same batch, such as MoCo~\cite{he2019momentum}), we present another proof for the general CPC objective as presented in $L(g)$. 
First, we show the following result for variational lower bounds of KL divergences between general distributions where batches of negative samples are used to estimate the divergence. Then, as mutual information is a KL divergence between two specific distributions, the lower bound argument for CPC simply follows.
\begin{restatable}{theorem}{thmkl}
\label{thm:kl}
For all probability measures $P, Q$ over sample space $\gX$ such that $P \ll Q$, the following holds for all functions $r: \gX \to \R_{+}$ and integers $m \geq 2$:
\begin{align}
    \KL(P \Vert Q) \geq \bb{E}_{\vx \sim P, \vy_{1:m-1} \sim Q^{m-1}}\left[\log \frac{m \cdot r(\vx)}{r(\vx) + \sum_{i=1}^{m-1} r(\vy_i) }\right].
\end{align}
\end{restatable}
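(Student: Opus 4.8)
The plan is to exploit the Radon--Nikodym derivative $w \defeq \diff P/\diff Q$, which exists because $P \ll Q$, together with the identity $\KL(P \Vert Q) = \E_{\vx \sim P}[\log w(\vx)]$. The idea is to rewrite the right-hand side as $\KL(P \Vert Q)$ plus a remainder term, and then show that the remainder is nonpositive through a single application of Jensen's inequality. This avoids any appeal to the Donsker--Varadhan representation and keeps the argument self-contained.

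First I would insert $w$ into the logarithm. Since $\log w(\vx)$ does not depend on the negative samples $\vy_{1:m-1}$, I can split
\[
\log \frac{m \cdot r(\vx)}{r(\vx) + \sum_{i=1}^{m-1} r(\vy_i)} = \log w(\vx) + \log \frac{m \cdot r(\vx)/w(\vx)}{r(\vx) + \sum_{i=1}^{m-1} r(\vy_i)}.
\]
Taking the expectation over $\vx \sim P,\ \vy_{1:m-1} \sim Q^{m-1}$, the first term contributes exactly $\KL(P \Vert Q)$, so it remains to show that the expectation of the second term, call it $T$, satisfies $T \le 0$.

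Next I would bound $T$ using concavity of $\log$. Writing $Z$ for the argument of the second logarithm, Jensen's inequality gives $T \le \log \E[Z]$. The key computation is that $\E[Z] = 1$: the change of measure $\E_{\vx \sim P}[f] = \E_{\vx \sim Q}[w(\vx)\, f]$ cancels the $1/w(\vx)$ factor sitting inside $Z$, after which $\vx, \vy_1, \dots, \vy_{m-1}$ become i.i.d.\ draws from $Q$. The resulting quantity $\E_{Q^m}\big[ m\, r(\vz_1) / \sum_{j=1}^m r(\vz_j) \big]$ equals $1$ by exchangeability of the $m$ coordinates, since each of the $m$ symmetric ratios has the same mean and they sum to $1$. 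Hence $T \le \log 1 = 0$, which yields the claim.

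The main obstacle is the measure-theoretic bookkeeping around $w$ rather than any hard inequality. I must check that the split is well defined, i.e.\ that $\{w = 0\}$ is $P$-null (it is, because $P(\{w=0\}) = \int_{\{w=0\}} w\, \diff Q = 0$), so that $r(\vx)/w(\vx)$ causes no trouble under $\vx \sim P$, and that the change of measure is valid on the support of $Q$ (on $\{w=0\}$ the factor $w(\vx)$ kills the integrand). The only genuine idea beyond routine manipulation is recognizing that the change of measure symmetrizes the positive and negative samples into exchangeable $Q$-draws, which is precisely what forces the Jensen bound to take the tight value $\log 1 = 0$. As a sanity check, I would note that equality is approached when $r \propto w$, consistent with the density-ratio critic being optimal.
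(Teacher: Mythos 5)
Your proof is correct, and it takes a genuinely different route from the paper's. The paper invokes the Nguyen--Wainwright--Jordan dual representation of the KL divergence, $\KL(P\Vert Q) \ge \E_{P}[T] - \E_{Q}[e^{T}] + 1$, applied to the critic $T(\vx) = \log\bigl(m\,r(\vx)/(r(\vx)+\sum_{i} r(\vy_i))\bigr)$ for each fixed draw of the negatives, and then bounds $\E_{\vx\sim Q}[e^{T(\vx)}] \le 1$ via its exchangeability proposition (proved there by a Taylor expansion so as to cover general re-weightings $\alpha$). You instead peel off $\KL(P\Vert Q) = \E_{P}[\log w]$ with $w = \diff P/\diff Q$ and control the remainder by one Jensen step; since your remainder equals $\E_{P}[T] - \KL(P\Vert Q)$ and your $\E[Z]$ equals $\E_{Q}[e^{T}]$, you have in effect derived the Donsker--Varadhan form $\KL \ge \E_{P}[T] - \log\E_{Q}[e^{T}]$ for this critic rather than the NWJ form. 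The two coincide here because the shared key ingredient---after the change of measure, $\vx,\vy_1,\dots,\vy_{m-1}$ are i.i.d.\ under $Q$, so $\E_{Q}[e^{T}] = \E_{Q^m}\bigl[m\,r(\vz_1)/\sum_{j} r(\vz_j)\bigr] = 1$ exactly by symmetry---makes both $\log\E_{Q}[e^{T}]$ and $\E_{Q}[e^{T}]-1$ vanish. What your version buys is self-containedness (no appeal to $f$-divergence duality) and a one-line symmetry computation in place of the paper's Taylor-expansion machinery; what it gives up is generality, since for the re-weighted denominators of $\alpha$-CPC and ML-CPC the sum is no longer symmetric, the expectation can only be bounded by $1/\alpha$ rather than computed exactly, and the paper's proposition is what carries the argument there. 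Your measure-theoretic caveats (the $P$-nullity of $\{w=0\}$, and implicitly that the case $\KL(P\Vert Q)=\infty$ is trivial so the additive split is legitimate) are the right ones and are handled correctly.
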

\begin{proof}
In Appendix~\ref{app:proofs}, using the variational representations of $f$-divergences~\cite{nguyen2008estimating} and Prop.~\ref{thm:exchangeable}. 
\end{proof}
The argument about CPC being a lower bound to MI is simply a corollary of the above statement where $P$ is $p(\vx, \vy)$ (joint) and $Q$ is $p(\vx)p(\vy)$ (product of marginals); we state the claim below.
\begin{restatable}{corollary}{cpclb}
$\forall n \geq 1, m \geq 2$, $\forall g: \gX \times \gY \to \R_{+}$, the following is true: $L(g) \leq I(X; Y)$.
\end{restatable}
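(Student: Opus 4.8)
The plan is to express the mutual information as an average of conditional KL divergences and then invoke Theorem~\ref{thm:kl} fiberwise over the conditioning variable. I would begin from the disintegration identity
\begin{align}
    I(X; Y) = \E_{\vx \sim \px{}}\left[\KL\big(p(\cdot \mid \vx) \,\Vert\, \py{}\big)\right],
\end{align}
which holds because $I(X;Y) = \KL(\pxy{} \Vert \pxpy{})$ and $\pxy{}$ factors as $\px{}\,p(\cdot\mid\vx)$.

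The crux is that $L(g)$ is not in the ``independent negatives'' form of Theorem~\ref{thm:kl}: in each summand the anchor $\vx_i$ is \emph{shared} between the positive pair $\xiyi$ and all $m-1$ negative pairs $\xiyj$, so the positive and the negatives are not jointly independent, and I cannot apply the theorem on the product space $\gX\times\gY$ with $P=\pxy{}$ and $Q=\pxpy{}$ directly. I would circumvent this by conditioning on the anchor. For a fixed value $\vx_i = \vx$, the positive representation is distributed as $p(\cdot\mid\vx)$, the negatives $\overline{\vy_{i,j}}$ are i.i.d.\ from $\py{}$, and the critic enters only through $g(\vx,\cdot):\gY\to\R_{+}$.

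I would then apply Theorem~\ref{thm:kl} on the sample space $\gY$ with $P = p(\cdot\mid\vx)$, $Q = \py{}$, $r = g(\vx,\cdot)$, and the same $m$. The hypothesis $p(\cdot\mid\vx)\ll\py{}$ holds for $\px{}$-almost every $\vx$ since $\py{} = \E_{\vx}[p(\cdot\mid\vx)]$, so any $\py{}$-null set is $p(\cdot\mid\vx)$-null for a.e.\ $\vx$. The theorem then gives, for $\px{}$-a.e.\ $\vx$,
\begin{align}
    \KL\big(p(\cdot\mid\vx)\,\Vert\,\py{}\big) \;\geq\; \E\!\left[\log \frac{m \cdot \fii}{\fii + \sum_{j=1}^{m-1}\fij}\,\Big|\,\vx_i = \vx\right],
\end{align}
whose right-hand side is exactly the conditional expectation of the $i$-th summand of $L(g)$ given $\vx_i = \vx$.

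Finally I would take the expectation over $\vx_i\sim\px{}$ and use the tower property together with the disintegration identity above to conclude $\E[\,i\text{-th summand}\,]\leq I(X;Y)$. This bound holds for every $i$ individually, irrespective of how the batches are coupled across different indices, so averaging over $i=1,\dots,n$ by linearity of expectation yields $L(g)\leq\frac{1}{n}\sum_{i=1}^{n} I(X;Y) = I(X;Y)$, as claimed. I expect the only delicate point to be the measure-theoretic bookkeeping of the conditioning step (the disintegration of $\pxy{}$ and the almost-everywhere absolute continuity); once Theorem~\ref{thm:kl} is in hand, the remainder is routine.
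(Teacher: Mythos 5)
Your proof is correct, and it takes a genuinely different --- and arguably more careful --- route than the paper's. The paper obtains the corollary by substituting $P = \pxy{}$ and $Q = \px{}\py{}$ directly into Theorem~\ref{thm:kl} on the product space $\gX\times\gY$. Read literally, that substitution bounds a variant of the objective in which the $m-1$ negative pairs are fresh independent draws $(\vx'_j,\vy'_j)\sim\px{}\py{}$ whose first components are unrelated to the anchor; it does not by itself produce $L(g)$ as written in \eqref{eq:cpc}, where every negative evaluation $g(\vx_i,\overline{\vy_{i,j}})$ reuses the anchor $\vx_i$ of the positive pair --- a coupling the paper itself highlights in its ``Distinctions from CPC'' discussion. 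You identify exactly this mismatch and repair it by disintegrating $I(X;Y)=\E_{\vx\sim\px{}}\left[\KL\left(p(\cdot\mid\vx)\,\Vert\,\py{}\right)\right]$ and applying Theorem~\ref{thm:kl} fiberwise on $\gY$ with $P=p(\cdot\mid\vx)$, $Q=\py{}$, $r=g(\vx,\cdot)$: conditioned on the anchor, the positive representation and the negatives genuinely are independent, so the theorem's hypotheses hold and its conclusion is precisely the conditional expectation of the $i$-th summand of $L(g)$; the tower property and averaging over $i$ finish the job. The measure-theoretic caveat you flag is harmless: if $\pxy{}\not\ll\px{}\py{}$ the claim is vacuous since $I(X;Y)=\infty$, and otherwise the Radon--Nikodym density of $\pxy{}$ with respect to $\px{}\py{}$ yields $p(\cdot\mid\vx)\ll\py{}$ for $\px{}$-almost every $\vx$ simultaneously for all null sets, not just one at a time. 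In short, the paper's product-space reading buys brevity and a statement that carries over directly to ML-CPC (where the negatives really are decoupled from the anchors), while your fiberwise reading is the one that actually establishes the corollary for the shared-anchor objective $L(g)$ as defined.
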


Therefore, one can train $g$ and $h$ to maximize $L(g)$ (recall that $L$ depends on $h$ via $\vy = h(\vx)$), which is guaranteed to be lower than $I(X; Y)$ in expectation.

\subsection{CPC is an estimator with high bias}
For finite $m$, since $\fii$ appears in both the numerator and denominator of \Eqref{eq:cpc} and $g$ is positive, 
the density ratio estimates can be no larger than $m$, and the value of $L(g)$ is thus upper bounded by $\log m$~\cite{oord2018representation}.
While this is acceptable for certain low dimensional scenarios, this can lead to high-bias if the true mutual information is much larger than $\log m$. In fact, the required $m$ can be unacceptable in high dimensions since MI can scale linearly with dimension, which means an exponential number of negative samples are needed to achieve low bias. 

For example, if $X$ and $Y$ are 1000-dimensional random variables where the marginal distribution for each dimension is standard Gaussian, and for each dimension $d$, $X_d$ and $Y_d$ has a correlation of $0.2$, then the mutual information $I(X; Y)$ is around $20.5$, which means that $m$ has to be greater than $4 \times 10^8$ in order for CPC estimates to approach this value. In comparison, state-of-the-art image representation learning methods use a $m$ that is around $65536$ and representation dimensions between $128$ to $2048$ \cite{wu2018unsupervised,he2019momentum,chen2020a} due to batch size and memory limitations, as one would need a sizeable batch of positive samples in order to apply batch normalization~\cite{ioffe2015batch}.


\subsection{Re-weighted Contrastive Predictive Coding}
Under the computational limitations imposed by $m$ (i.e., we cannot obtain too many negative samples per positive sample), we wish to develop generalizations to CPC that reduce bias while still being lower bounds to the mutual information. 
We do not consider other types of estimators such as MINE~\cite{belghazi2018mine} or NWJ~\cite{nguyen2008estimating} because they would exhibit high variance on the order of $O(e^{I(X; Y)})$~\cite{song2019understanding}, and thus are much less stable to optimize. 

One possible approach is to decrease the weights of the positive sample when calculating the sum in the denominator; this leads to the following objective, called $\alpha$-CPC:
\begin{align}
    L_\alpha(g) := \E\Bigg[\frac{1}{n} \sum_{i=1}^n \log{\frac{m \cdot \fii}{\alpha \fii + \frac{m-\alpha}{m-1}\sum_{j=1}^{m-1} \fij}}\Bigg] \label{eq:a-cpc}
\end{align}
where the positive sample is weighted by $\alpha$ and negative samples are weighted by $\frac{m - \alpha}{m - 1}$. The purpose of adding weights to negative samples is to make sure the the weights sum to $m$, like in the original case where each sample has weight $1$ and there are $m$ samples in total. 
Clearly, the original CPC objective is a special case when $\alpha = 1$.  

On the one hand, $L_\alpha(g)$ is now upper bounded by $\log \frac{m}{\alpha}$, which is larger than $\log m$ when $\alpha \in (0, 1)$. Thus, $\alpha$-CPC has the potential to reduce bias when $\log m$ is much smaller than $I(X; Y)$. On the other hand, when we set a smaller $\alpha$, the variance of the estimator becomes larger, and the objective $L_\alpha(g)$ 
becomes more difficult to optimize~\cite{huang2016learning,huang2019deep}. Therefore, selecting an appropriate $\alpha$ to balance the bias-variance trade-off is helpful for optimization of the objective in practice. 

However, it is now possible for $L_\alpha(g)$ to be larger than $I(X; Y)$ as the number of classes $m$ grows to infinity, so optimizing $L_\alpha(g)$ does not necessarily recover a lower bound to mutual information. We illustrate this via the following example (more details in Appendix~\ref{app:exps}).

\begin{example}\label{example:binary}
Let $X, Y$ be two binary r.v.s such that $\Pr(X=1, Y=1) = \Pr(X=0, Y=0) = 0.5$. Then $I(X; Y) = \log 2 \approx 0.69$. However, when $\alpha = 0.5$ and $n = m = 3$, we can analytically compute $L_\alpha(g) \approx 0.72 \geq I(X; Y)$ for $g(x, y) = 1$ if $x = y$ and near $0$ otherwise.
\end{example}

\section{Multi-label Contrastive Predictive Coding}

While $\alpha$-CPC could be useful empirically, we lack a principled way to select proper values of $\alpha$ as $L_\alpha(g)$ may no longer be a lower bound to mutual information. In the following sections, we propose an approach that allows us to achieve both, \textit{i.e.}, for all $\alpha$ in a certain range (that only depends on $n$ and $m$), we can achieve an upper bound of $\log \frac{m}{\alpha}$ while ensuring that the objective is still a lower bound on mutual information. This allows us to select different values of $\alpha$ to reflect different preferences over bias and variance, all while keeping the computational cost identical.

We consider solving a ``$nm$-class, $n$-label'' classification problem, where given $n$ positive samples and $n (m-1)$ negative samples $\overline{\vy_{j,k}} \sim \py{}$ , we wish to jointly identify the top-$n$ samples that are most likely to be the positive ones. Concretely, this has the following objective function:
\begin{align}
    J(g) := \E\Bigg[\frac{1}{n} \sum_{i=1}^n \log{\frac{nm \cdot \fii}{\sum_{j=1}^n \fjj + \sum_{j=1}^n \sum_{k=1}^{m-1} \fjk}}\Bigg]
\end{align}
where the expectation is taken over the $n$ positive samples $(\vx_i, \vy_i) \sim \pxy{}$ for $i \in [n]$ and the $n(m-1)$ negative samples $\overline{\vy_{j,k}} \sim \py{}$ for $j \in [n], k \in [m-1]$. We call this \textit{multi-label contrastive predictive coding} (ML-CPC), since the classifier now needs to predict $n$ positive labels from $nm$ options at the same time, instead of $1$ positive label from $m$ options as in traditional CPC (performed for $n$ times for a batch size of $n$). 

\paragraph{Distinctions from CPC} Despite its similarity compared to CPC (both are based on classification), we note that the multi-label perspective is fundamentally different from the CPC paradigm in three aspects, and cannot be treated as simply increasing the number of negative samples. 
\begin{enumerate}[leftmargin=*]
    \item The ML-CPC objective value depends on the batch size $n$, whereas the CPC objective does not. 
    \item In CPC the positive pair and negative pairs share a same element 
    ($\vx_i$ in \eqref{eq:cpc} where the positive sample is $(\vx_i, \vy_i)$), whereas in ML-CPC the negative pairs no longer have such restrictions; this could be useful for smaller datasets $\gD$ when the number of possible negative pairs increases from $O(|\gD|)$ to $O(|\gD|^2)$.
    \item The optimal critic for CPC is $g^\star = c(\vx) \cdot \pxy{} / (\px{}\py{})$, where $c$ is any positive function of $\vx$~\cite{ma2018noise}. In ML-CPC, different $\vx$ values are tied within the same batch, so the optimal critic for ML-CPC is $g^\star = c \cdot \pxy{} / (\px{}\py{})$, where $c$ is a positive constant. As a result, ML-CPC reduces the amount of optimal solutions, and forces the similarity of \textit{any} positive pair to be higher than that of \textit{any} negative pair, unlike CPC where the positive pair only needs to have higher similarity than any negative pairs with the same $x$. 
\end{enumerate}

\paragraph{Computational cost of ML-CPC} To compute CPC with a batch size of $n$, one would need $nm$ critic evaluations 
and compute $n$ sums in the denominator, each over a different set of $m$ evaluations.
To compute ML-CPC, one needs $nm$ critic evaluations, and compute 1 sum over all $n m$ evaluations. Therefore, ML-CPC has almost the same computational cost compared to CPC which is $O(mn)$. We perform a similar analysis in Appendix~\ref{app:proofs} to show that evaluating the gradients of the objectives also has similar costs, so ML-CPC is computationally as efficient as CPC.

\subsection{Re-weighted Multi-label Contrastive Predictive Coding}
Similar to $\alpha$-CPC, we can modify the multi-label objective $J(g)$ by re-weighting the critic predictions, which results in the following objective called $\alpha$-ML-CPC:
\begin{align}
    J_{\alpha}(g) := \E\Bigg[\frac{1}{n} \sum_{i=1}^n \log{\frac{n m \cdot \fii}{\alpha \sum_{j=1}^n \fjj + \frac{m - \alpha}{m-1} \sum_{j=1}^n \sum_{k=1}^{m-1} \fjk}}\Bigg] \label{eq:a-ml-cpc}
\end{align}
For $\alpha \in (0, 1)$, we down-weight the positive critic outputs by $\alpha$ and up-weight the negative critic outputs by $\frac{m-\alpha}{m-1}$ (similar to $\alpha$-CPC). 
Setting a smaller $\alpha$ has the potential to reduce bias, since the upper bound of $\log m$ is changed to $\log \frac{m}{\alpha}$, which is larger when $\alpha \in (0, 1)$. 
In contrast to $\alpha$-CPC, 
$J_{\alpha}(g)$ is now guaranteed to be a lower bound of mutual information for a wide range of $\alpha$, as we show in the following statements. Similar to the case of CPC, we first show a more general argument, for which the weighted ML-CPC is a special case.
\begin{restatable}{theorem}{thmklm}
\label{thm:klm}
For all probability measures $P, Q$ over sample space $\gX$ such that $P \ll Q$, the following holds for all functions $r: \gX \to \R_{+}$, integers $n \geq 1, m \geq 2$, and real numbers $\alpha \in [\frac{m}{n (m-1) + 1}, 1]$:
\begin{align}
    \KL(P \Vert Q) \geq \bb{E}_{\vx_{1:n} \sim P^{n}, \vy_{i, 1:m-1} \sim Q^{m-1}}\left[\frac1n \sum_{i=1}^{n} \log \frac{mn \cdot r(\vx_i)}{\alpha \sum_{j=1}^{n} r(\vx_j) + \frac{m - \alpha}{m - 1} \sum_{k=1}^{m-1} r(\vy_{j,k}) }\right].
\end{align}
\end{restatable}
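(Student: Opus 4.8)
The plan is to treat this as the natural batched, re-weighted generalization of Theorem~\ref{thm:kl} and to prove it with the same two ingredients used there: the variational (Nguyen--Wainwright--Jordan) lower bound for the KL divergence, together with an exchangeability argument. Since $\KL(P \Vert Q) = \frac1n \KL(P^{\otimes n} \Vert Q^{\otimes n})$, I would first lift the problem to the product space $\gX^{nm}$, comparing $\mathbb{P} = P^{\otimes n} \otimes Q^{\otimes n(m-1)}$ (positives in the first $n$ slots, negatives in the remaining $n(m-1)$) against $\mathbb{Q} = Q^{\otimes nm}$, for which $\KL(\mathbb{P} \Vert \mathbb{Q}) = n\,\KL(P \Vert Q)$. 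Writing $W = \frac{m-\alpha}{m-1}$ for the negative weight and $D$ for the weighted denominator in the statement, the goal becomes the equivalent inequality $\KL(\mathbb{P} \Vert \mathbb{Q}) \geq \E_{\mathbb{P}}[\,\sum_{i=1}^{n} \log \frac{mn\,r(\vx_i)}{D}\,]$, which is the theorem's right-hand side scaled by $n$.

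First I would apply $\KL(\mathbb{P} \Vert \mathbb{Q}) \geq \E_{\mathbb{P}}[T] - \E_{\mathbb{Q}}[e^{T-1}]$ with the self-normalized witness $T(z) = c + \sum_{i=1}^{n} \log \frac{mn\,r(z_i)}{D(z)}$, where slots $1, \dots, n$ carry weight $\alpha$ and the other $n(m-1)$ slots carry weight $W$ inside $D$. By construction $\E_{\mathbb{P}}[T]$ reproduces the target up to the additive constant $c$, and optimizing over $c$ collapses the whole bound to a single requirement: that the normalization term $\E_{\mathbb{Q}}[\,\prod_{i=1}^{n} \frac{mn\,r(z_i)}{D(z)}\,]$ be at most $1$. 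Because under $\mathbb{Q}$ all $nm$ samples are i.i.d.\ from $Q$, this is where the exchangeability proposition should enter: relabeling slots shows the term is symmetric across size-$n$ subsets, so it equals the average over all $\binom{nm}{n}$ positive sets $S$ of $\prod_{l \in S} r(z_l) / D_S^{\,n}$, where $D_S = \alpha \sum_{l \in S} r(z_l) + W \sum_{l \notin S} r(z_l)$.

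The hard part will be controlling this normalization at exactly the stated threshold. A clean reformulation is that $\alpha \geq \frac{m}{n(m-1)+1}$ is equivalent to $W \leq n\alpha$, i.e.\ the per-negative weight is at most the aggregate positive weight. At $\alpha = 1$ we have $W = \alpha$, all $nm$ weights coincide, and the normalization degenerates to a symmetric sum equal to $1$ by exchangeability, recovering the exact cancellation behind Theorem~\ref{thm:kl}. For $\alpha < 1$, however, the weights are genuinely unequal and the $n-1$ ``other'' positives sit in the denominator distributed as $P$ rather than $Q$, so neither plain exchangeability nor the naive product bound goes through; indeed small cases (e.g.\ $n=m=2$ at $\alpha = \tfrac23$) show the product normalization can exceed $1$ at the boundary. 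The real content of the theorem is thus a sharper argument — either a witness retaining a compensating nonnegative KL/entropy residual, or a convexity/Maclaurin-type inequality on the weighted elementary-symmetric numerators against the $D_S^{\,n}$ denominators — engineered so that the admissible set of $\alpha$ collapses to precisely $W \leq n\alpha$. Pinning down that argument, uniformly over all positive $r$ (the extremal case being $r$ concentrated on few coordinates) and verifying tightness at $\alpha = \frac{m}{n(m-1)+1}$, is where essentially all the difficulty lies; the reduction to general $(P,Q)$ and the exchangeable bookkeeping are routine once Theorem~\ref{thm:kl} is available.
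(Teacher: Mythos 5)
You have the right raw ingredients in mind (the variational bound of Lemma~\ref{thm:nwj} plus an exchangeability argument), but the witness you choose makes the route unworkable, and you correctly sense this yourself. Lifting to the product space and taking the joint witness $T = c + \sum_{i=1}^n \log\frac{mn\,r(z_i)}{D(z)}$ forces you to control the \emph{product} normalization $\E_{\mathbb{Q}}[\,\prod_{i=1}^n mn\,r(z_i)/D(z)\,]\le 1$, and this is indeed false on the admissible range: for $n=m=2$, $\alpha=2/3$, and i.i.d.\ $r(z_\ell)=1+\epsilon_\ell$ with small centered noise of variance $\sigma^2$, a second-order expansion of $\E_{\mathbb{Q}}[16\,r(z_1)r(z_2)/D^2]$ gives $1+\sigma^2/6>1$. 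Consequently no choice of $c$ rescues this route: optimizing $c$ yields the Donsker--Varadhan-type bound $\E_{\mathbb{P}}[S]-\log\E_{\mathbb{Q}}[e^S]$, which is strictly smaller than $\E_{\mathbb{P}}[S]$ whenever $\E_{\mathbb{Q}}[e^S]>1$, so it cannot certify the theorem's claim that $\E_{\mathbb{P}}[S]\le n\,\KL(P\Vert Q)$.

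The missing idea is to decouple the positives \emph{before} invoking the variational bound, rather than search for a sharper normalization inequality. The paper first applies Jensen's inequality to the concave $\log$ over the batch index: writing $C=\frac{m-\alpha}{m-1}\sum_{j,k}r(\vy_{j,k})$, one has $\frac1n\sum_{i}\log(n\alpha\,r(\vx_i)+C)\le\log(\alpha\sum_{j}r(\vx_j)+C)$, so the objective is upper bounded by $\E[\frac1n\sum_i\log\frac{nm\,r(\vx_i)}{n\alpha\,r(\vx_i)+C}]$, in which the $i$-th denominator no longer involves the other positives. Lemma~\ref{thm:nwj} is then applied to each term separately (one positive sample versus $Q$, conditioning on the negatives), and the quantity that must be at most $1$ becomes the expectation of an \emph{average of single ratios}, $\E[\frac1n\sum_i\frac{nm\,r(\vx_i)}{n\alpha\,r(\vx_i)+C}]$ with $\vx_i\sim Q$ --- not a product. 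This is exactly Proposition~\ref{thm:exchangeable-v2} applied to the $n(m-1)+1$ exchangeable variables $r(\vx_i),\{r(\vy_{j,k})\}_{j,k}$ with effective weight $\alpha'=\frac{n(m-1)+1}{m}\alpha$; the requirement $\alpha'\in[1,\frac{n(m-1)+1}{2}]$ is precisely where the threshold $\alpha\ge\frac{m}{n(m-1)+1}$ comes from (your reformulation $\frac{m-\alpha}{m-1}\le n\alpha$ of that threshold is correct and is a nice way to see it). The technical content you leave open --- the Taylor-expansion bound $\E[(X_i/\Sigma_i)^{p+1}]\le 1/m$ for exchangeable positive variables and the HM--AM extension of the admissible weight range --- lives in Propositions~\ref{thm:exchangeable} and~\ref{thm:exchangeable-v2}, which your proposal never engages with.
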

\begin{proof}
In Appendix~\ref{app:proofs}, using the variational representations of $f$-divergences~\cite{nguyen2008estimating} and Prop.~\ref{thm:exchangeable-v2}.
\end{proof}
The above theorem extends existing variational lower bound estimators of KL divergences (that are generally interpreted as binary classification~\cite{sugiyama2012density,nguyen2008estimating}) into a family of lower bounds that can be interpreted as multi-label classification. 
The argument about re-weighted ML-CPC being a lower bound to MI is simply a corollary where $P$ is $p(\vx, \vy)$ and $Q$ is $p(\vx)p(\vy)$; we state the claim below.

\begin{restatable}{corollary}{thmmain}
\label{thm:main}
$\forall n \geq 1, m \geq 2$, define $\alpha_{m,n} = \frac{m}{n (m-1) + 1}$. If $\alpha \in [\alpha_{m,n}, 1]$, then $\forall g: \gX \times \gY \to \R_{+}$,
\begin{align}
    J_\alpha(g) \leq I(X; Y)
\end{align}
\end{restatable}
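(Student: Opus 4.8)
The plan is to obtain Corollary~\ref{thm:main} as the specialization of Theorem~\ref{thm:klm} to the pair of distributions whose KL divergence is the mutual information. Concretely, I would take the abstract sample space of Theorem~\ref{thm:klm} to be the product $\gX \times \gY$ and set $P = \pxy{}$ (the law of a positive pair) and $Q = \pxpy{}$ (the product of marginals), with the critic playing the role of the test function, i.e.\ $r = g$. Then the left-hand side becomes $\KL(\pxy{} \Vert \pxpy{}) = I(X;Y)$ by definition, and the absolute-continuity hypothesis $P \ll Q$ holds whenever $I(X;Y)$ is finite, since positivity of the joint $\pxy{}$ forces positivity of $\px{}$ and $\py{}$, hence of $\pxpy{}$. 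The admissible range of weights transfers verbatim: for $\alpha \in [\alpha_{m,n}, 1]$ with $\alpha_{m,n} = \frac{m}{n(m-1)+1}$, Theorem~\ref{thm:klm} applies, so the whole task reduces to recognizing its right-hand side as $J_\alpha(g)$ from \eqref{eq:a-ml-cpc}.

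For the identification I would match the draws one by one. Each abstract positive sample is instantiated as a positive pair $(\vx_i,\vy_i)\sim\pxy{}$, so that $r$ evaluates to $\fii$, which is exactly the numerator and one summand of the positive part of the denominator. Each abstract negative sample, required to be a $Q$-draw, is instantiated as the pair $(\vx_j,\overline{\vy_{j,k}})$ with $\overline{\vy_{j,k}}\sim\py{}$ drawn independently; marginally this pair is distributed as $\pxpy{}=Q$, and $r$ evaluates to $\fjk$. Plugging in these critic values, together with the weights $\alpha$ on the positives and $\frac{m-\alpha}{m-1}$ on the negatives and the normalizing factor $nm$, turns the bracketed expression of Theorem~\ref{thm:klm} into precisely the integrand of $J_\alpha(g)$, and taking expectations gives $J_\alpha(g)\le I(X;Y)$.

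The step I expect to be the crux is justifying that the negatives $(\vx_j,\overline{\vy_{j,k}})$ genuinely realize the negative draws demanded by Theorem~\ref{thm:klm}. Each such pair has the correct marginal law $\pxpy{}$, but it \emph{reuses} the component $\vx_j$ already appearing in the positive pair $(\vx_j,\vy_j)$, so within a block the positive and negative critic evaluations are coupled through $\vx_j$ rather than independent---this is exactly the ``tying of $\vx$ values within a batch'' emphasized as the third distinction from CPC. The content of the corollary is therefore that the bound survives this coupling, and I would discharge it by appealing to how Theorem~\ref{thm:klm} is proved: its derivation routes through the variational representation of the KL divergence and the exchangeability argument of Prop.~\ref{thm:exchangeable-v2}, which is built precisely to accommodate the shared-$\vx_j$ batch structure rather than to assume full independence of the $nm$ critic evaluations. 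The one thing that must be checked explicitly is that this exchangeability step really does apply to the specific coupling induced by the substitution; if one wanted to sidestep it, an alternative is to draw fresh independent pairs from $\pxpy{}$, for which the specialization is immediate, and then argue separately that the computationally convenient shared-$\vx_j$ form of $J_\alpha$ inherits the same upper bound.
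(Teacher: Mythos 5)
Your proposal follows the paper's own route exactly: Corollary~\ref{thm:main} is obtained by instantiating Theorem~\ref{thm:klm} with $P = \pxy{}$, $Q = \pxpy{}$, $r = g$, so that $\KL(P\Vert Q) = I(X;Y)$ and the right-hand side becomes $J_\alpha(g)$. The paper gives no more detail than this substitution, and your identification of the samples and weights matches it.

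The one place where you go beyond the paper is in flagging the coupling of the negatives $(\vx_j,\overline{\vy_{j,k}})$ to the positives through the shared $\vx_j$, and your instinct that this is the crux is correct --- but your proposed discharge is not accurate as stated. Proposition~\ref{thm:exchangeable-v2} is \emph{not} built to accommodate the shared-$\vx_j$ structure: it assumes the full collection of $n(m-1)+1$ positive random variables is exchangeable, which holds in the abstract theorem because all draws from $Q$ there are i.i.d. Under the substitution with reused $\vx_j$, the collection $\{g(\vx_i,\vy_i)\}\cup\{g(\vx_j,\overline{\vy_{j,k}})\}_{j,k}$ (with the positive re-drawn from $Q$ for the second NWJ term) is only conditionally i.i.d.\ \emph{within} each block $j$ and independent \emph{across} blocks; two evaluations sharing $\vx_j$ are correlated while two from different blocks are not, so the joint law is not permutation-invariant and the proposition does not literally apply. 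The instantiation that Theorem~\ref{thm:klm} delivers verbatim is the version of $J_\alpha$ with fresh independent negative pairs from $\pxpy{}$ --- precisely the fallback you name at the end --- and that is the reading under which the corollary follows immediately; handling the tied form of \eqref{eq:a-ml-cpc} rigorously would require an extended-space NWJ argument with exchangeability applied conditionally on $\vx_{1:n}$, which neither you nor the paper carries out.
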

The above theorem shows that for an appropriate range of $\alpha$ values, the objective $J_\alpha(g)$ is still guaranteed to be a variational lower bound to mutual information, like the original CPC objective. Selecting $\alpha$ within this range results in estimators with different bias-variance trade-offs. Here, a smaller $\alpha$ could lead to low-bias high-variance estimates; this achieves a similar effect to increasing the number of classes $m$ to nearly $m / \alpha$, but without the actual additional computational costs that comes with obtaining more negative samples in CPC.

\paragraph{Illustrative example} We consider the case of $X, Y$ being binary and equal random variables in Example~\ref{example:binary}, where $I(X; Y) = \log 2 \approx 0.69$, the optimal critic $g$ is known, and both $L_\alpha(g)$ and $J_\alpha(g)$ can be computed in closed-form for any $\alpha$ and $g$ in $O(m)$ time (details in Appendix~\ref{app:exps}). We plot the CPC (\eqref{eq:a-cpc}) and ML-CPC (\eqref{eq:a-ml-cpc}) objectives with different choices of $\alpha$ and $m$ in Figure~\ref{fig:demo}. The estimates of ML-CPC when $\alpha \geq \alpha_{m,n}$ are lower bounds to the ground truth MI, which indeed aligns with our theory. 

Furthermore, in Figure~\ref{fig:bias-variance} we illustrate the bias-variance trade-offs for CPC and $\alpha_{m,n}$-ML-CPC
as we vary the number of classes $m$ (for simplicity, we choose $n = m$). Despite having slightly higher variance in the estimates, $\alpha_{m,n}$-ML-CPC has significantly less bias than CPC, which suggests that it is helpful in cases where lower bias is preferable than lower variance. In practice, the user could select different values of $\alpha$ to indicate the desired trade-off, all without having to change the number of negative samples and increase computational costs.

We include the pseudo-code and a PyTorch implementation to $\alpha$-ML-CPC in Appendix~\ref{app:pseudocode}.

\begin{figure}
    \centering
    \includegraphics[width=\textwidth]{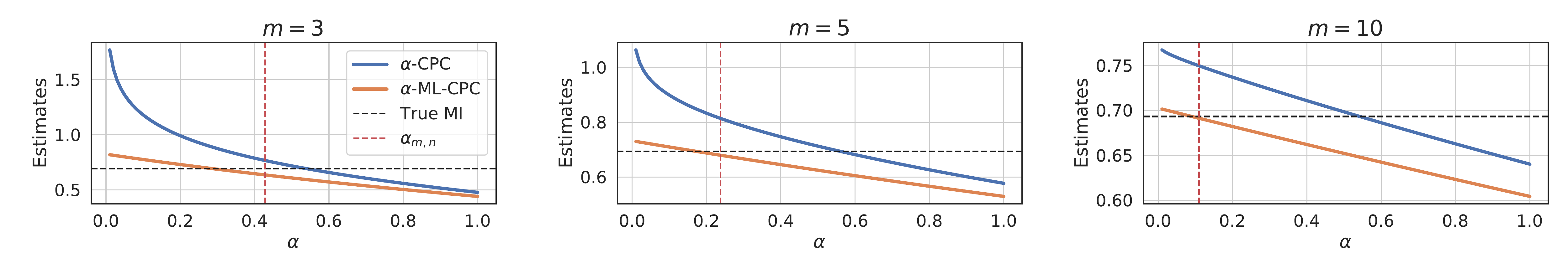}
    \caption{MI estimates with CPC and ML-CPC under different $\alpha$.}
    \label{fig:demo}
\end{figure}

\begin{figure}
    \centering
\includegraphics[width=0.6\textwidth]{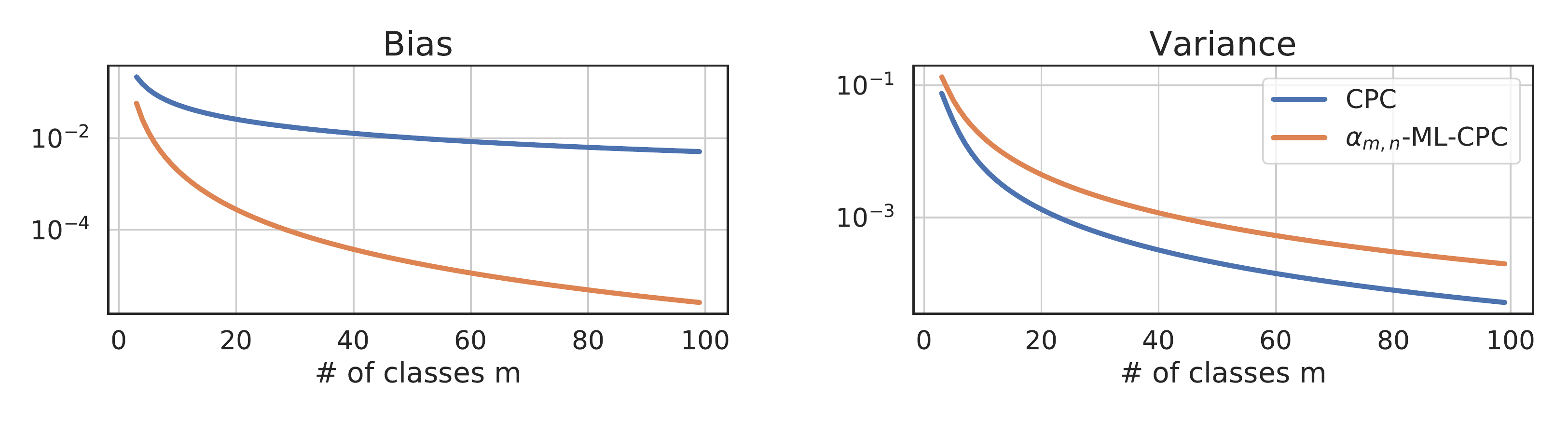}
\caption{Bias-variance trade-offs for different $m$. Lower is better. 
}\label{fig:bias-variance}
\end{figure}

\section{Related Work}

\paragraph{Contrastive methods for representation learning} The general principle of contrastive methods for representation learning encourages representations to be closer between ``positive'' pairs and further between ``negative'' pairs, which has been applied to learning representations in various domains such as images~\cite{henaff2019data,wu2018unsupervised,he2019momentum,chen2020a}, words~\cite{mikolov2013distributed,devlin2018bert}, graphs~\cite{velickovic2018deep} and videos~\cite{han2019video}. Commonly used objectives include the logistic loss \cite{mikolov2013distributed}, margin triplet loss~\cite{schroff2015facenet}, the noise contrastive estimation loss~\cite{gutmann2012noise} and other objectives based on variational lower bounds of mutual information, such as MINE~\cite{belghazi2018mine} and CPC \cite{oord2018representation}. CPC-based approaches have gained much recent interest due to its superior performance in downstream tasks compared to other losses such as the logistic and margin loss~\cite{chen2020a}.

\paragraph{Variational mutual information estimators} Estimating mutual information from samples is challenging~\cite{mcallester2020formal,xu2020theory}. Most variational approaches to mutual information estimation are based on the Fenchel dual representation of $f$-divergences \cite{nguyen2008estimating,song2019bridging}, where a critic function is trained to learn the density ratio $\pxy{} / (\px{} \py{})$. These approaches mostly vary in terms of how the critics are modeled and optimized~\cite{barber2003the,poole2019on}, and exhibit different bias-variance trade-offs from these choices. 

CPC would tend to underestimate the density ratio (since it is capped at $m$) and generally requires $O(e^{I(X; Y)})$ samples to achieve low bias; MINE~\cite{belghazi2018mine} (based on the Donsker-Varadhan inequality \cite{donsker1975asymptotic}) is a biased estimator and requires $O(e^{I(X; Y)})$ samples to achieve low variance~\cite{song2019understanding,song2019bridging}. Poole et al.~\cite{poole2019on} proposed an estimator that interpolates between two types of estimators, allowing for certain bias-variance trade-offs; this is relevant to our proposed re-weighted CPC in the sense that positive samples are down-weighted, but an additional baseline model is required during training. 
Through ML-CPC, we introduce a family of unbiased mutual information lower bound estimators, and reflect a wide range of bias-variance trade-offs without using more negative samples. 

\paragraph{Relevance to the limitations of mutual information lower bound estimators} Furthermore, we note that ML-CPC is upper bounded by $\log (n (m-1) + 1)$ for the smallest possible $\alpha$, which appears to be very close to (but smaller than) the general upper limit of $O(\log nm)$ that can be achieved by any distribution-free high-confidence lower bound on mutual information for $nm$ samples~\cite{mcallester2020formal}. However, we note that the assumptions in~\cite{mcallester2020formal} are slightly different to our settings, in the sense that they assumed complete access to the distribution $p(\vx, \vy)$ and only required samples from $p(\vx)p(\vy)$, whereas we have to estimate $p(\vx, \vy)$ from the samples as well; and the amount of samples we obtain from $p(\vx)p(\vy)$ is $n (m-1)$ instead of $n m$. We hypothesize that we can reach the theoretical limit with a method derived from ML-CPC, but leave it as an interesting future direction.

\paragraph{Re-weighted softmax loss} Generalizations to the softmax loss have been proposed in which different weights are assigned to different classes or samples~\cite{liu2016large,liu2017rethinking,wang2018additive}, which are commonly used with regularization~\cite{byrd2018effect}. When the dataset has extremely imbalanced classes, higher weights are given to classes with less frequency~\cite{huang2016learning,huang2019deep,wang2017learning} or classes with less effective samples~\cite{cui2019class}.
Cao et al.~\cite{cao2019learning} investigate re-weighting approaches that encourages large margins to the decision boundary for minority classes; such a context is also studied for detection~\cite{li2019overfitting} and segmentation~\cite{khan2019striking} where class imbalance exists. Our work introduce re-weighting approaches to the context of unsupervised representation learning (where class labels do not exist in the traditional sense), where we aim for flexible bias-variance trade-offs in contrastive mutual information estimators.

\newcommand{\posarrow}{(\textcolor{teal}{$\uparrow$})}
\newcommand{\negarrow}{(\textcolor{brown}{$\downarrow$})}

\section{Experiments}

We evaluate our proposed methods on mutual information estimation, knowledge distillation and unsupervised representation learning. \textit{To ensure fair comparisons are made, we only make adjustments to the training objective, and keep the remaining experimental setup identical to that of the baselines.} We describe details to the experimental setup in Appendix~\ref{app:exps}. Our code is available at \href{https://github.com/jiamings/ml-cpc}{{\color{gray}  \underline{\texttt https://github.com/jiamings/ml-cpc}}}.

\begin{figure}[h]
    \centering
    \includegraphics[width=\textwidth]{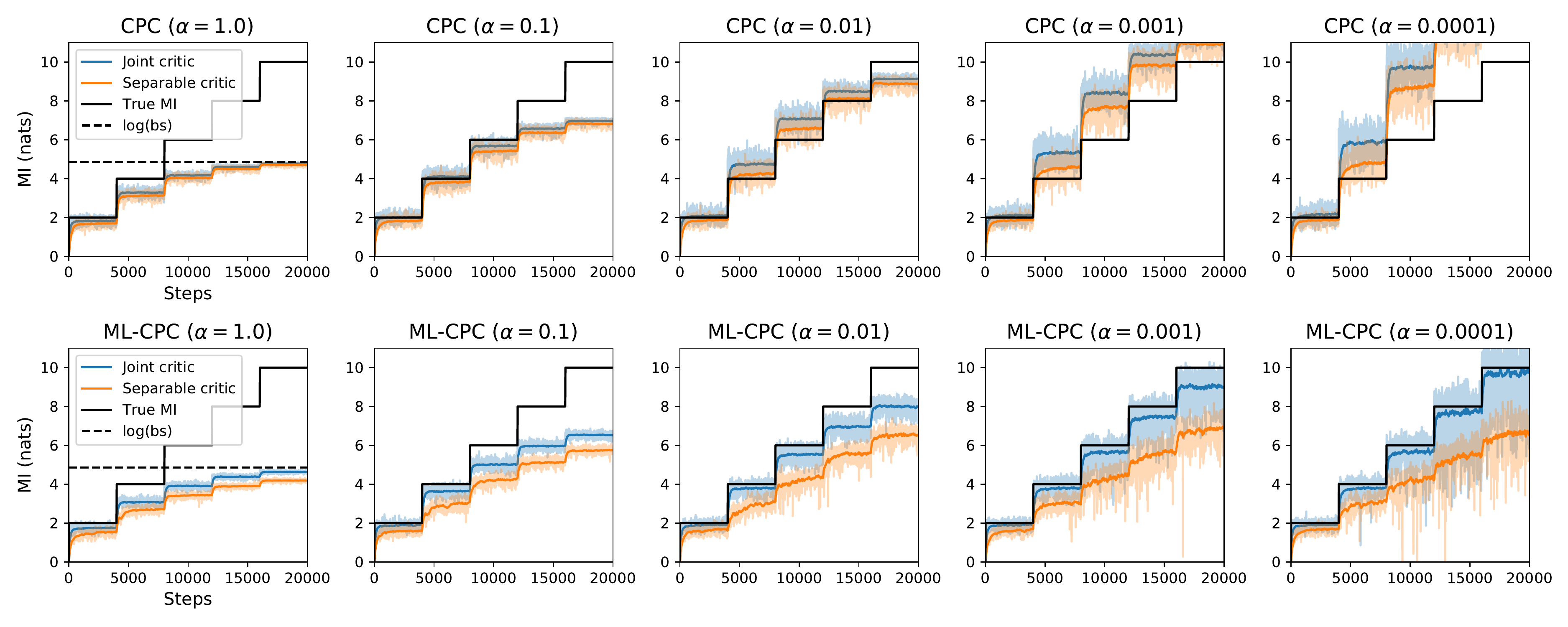}
    \caption{Mutual information estimation with CPC and ML-CPC with different choices of $\alpha$.}
    \label{fig:mi-sim}
\end{figure}

\subsection{Mutual Information Estimation}
\paragraph{Setup} We first consider mutual information estimation between two correlated Gaussians of 20 dimensions, following the setup in \cite{poole2019on,song2019understanding} where the ground truth mutual information is known and increases by 2 every 4k iterations, for a total of 20k iterations. We evaluate CPC and ML-CPC with different choices of $\alpha$ (ranging from $1.0$ to $0.0001$, which might not guarantee that they are lower bounds to mutual information) under two types of critic, named joint~\cite{belghazi2018mine} and separable~\cite{oord2018representation}. We use $m = n = 128$ in our experiments.

\paragraph{Results} We illustrate the estimates and the ground truth MI in Figure~\ref{fig:mi-sim}. Both CPC and ML-CPC estimates are bounded by $\log m$ when $\alpha = 1$, which is no longer the case when we set smaller values of $\alpha$; however, as we decrease $\alpha$, CPC estimates are no longer guaranteed to be lower bounds to mutual information, whereas ML-CPC estimates still provide lower bound estimates in general. Moreover, a reduction in $\alpha$ for ML-CPC reduces bias at the cost of increasing variance, as the problem becomes more difficult with re-weighting. The time to compute 200 updates on a Nvidia 1080 Ti GPU with the a PyTorch implementation is $1.15 \pm 0.06$ seconds with CPC and $1.14 \pm 0.04$ seconds with ML-CPC, so the computational costs are indeed near identical.

\subsection{Knowledge Distillation}
\paragraph{Setup} We apply re-weighted CPC and ML-CPC to knowledge distillation (KD, \cite{hinton2015distilling}), in which one neural network model (teacher) transfers its knowledge to another model (student, typically smaller) so that the student's performance is higher than training from labels alone. Contrastive representation distillation (CRD,~\cite{tian2019contrastive}) is a state-of-the-art method that regularizes the student model so that its features have higher mutual information with that of the teacher; CRD is implemented via a type of noise contrastive estimation objective~\cite{gutmann2012noise}. We replace this objective with CPC and ML-CPC, using different choices of $\alpha$ that are fixed throughout training, and \textit{keeping the remaining hyperparameters identical to the CRD ones} in~\cite{tian2019contrastive}.
Two baselines are considered: the original KD objective in \cite{hinton2015distilling} and the state-of-the-art CRD objective in \cite{tian2019contrastive}, since other baselines~\cite{koratana2019lit,ahn2019variational,huang2017like,kim2018paraphrasing} are shown to have inferior performance in general.

\paragraph{Results} Following the procedure in \cite{tian2019contrastive}, we evaluate over 13 different student-teacher pairs on CIFAR-100~\cite{krizhevsky2009learning}. The student and teacher have the same type of architecture in 7 cases and different types in 6 cases. We report top-1 test accuracy in Table~\ref{tab:kd-same} (same type) and Table~\ref{tab:kd-different} (different types), where each case is the mean evaluation from 3 random seeds. We omit the standard deviation across different random seeds of each setup to fit the table in the paper, but we note that deviation among different random seeds is fairly small (at around 0.05 to 0.1 for most cases). 
While CPC and ML-CPC are generally inferior to that of CRD when $\alpha = 1.0$ (this aligns with the observation in \cite{tian2019contrastive}), they outperform CRD in 10 out of 13 cases when a smaller $\alpha$ is selected. 

To demonstrate the effect of improved performance of smaller $\alpha$, we evaluate average top-1 accuracies with $\alpha \in \{0.01, 0.05, 0.1, 0.2, 0.5, 1.0\}$ in Figure~\ref{fig:kd-ablation}. Both CPC and ML-CPC are generally inferior to CRD when $\alpha = 1.0$ or $0.5$, but as we select smaller values of $\alpha$, they become superior to CRD and reaches the highest values at around $0.01$ to $0.05$, with ML-CPC being slightly better. Moreover, $n = 64, m = 16384$ so $\alpha_{m,n} \approx 0.015$, which achieves the lowest bias while ensuring ML-CPC to be a lower bound to MI. Thus this observation aligns with our claims on $\alpha_{m, n}$ in Theorem~\ref{thm:main}.

\begin{table}[htbp]
\centering
\caption{Top-1 \textit{test accuracy} (\%) of students networks on CIFAR100 where the student and teacher networks are of the same type. \posarrow \ and \negarrow \ denotes superior and inferior performance relative to CRD. Each result is the mean of 3 random runs. $L_\alpha$ and $J_\alpha$ denote $\alpha$-CPC and $\alpha$-ML-CPC.}
\label{tab:kd-same}
\begin{tabular}{l|ccccccc}
\toprule
{\small Teacher}       & {\small WRN-40-2} & {\small WRN-40-2} & {\small resnet56} & {\small resnet110} & {\small resnet110} & {\small resnet32x4} & {\small vgg13} \\
{\small Student}       & {\small WRN-16-2} & {\small WRN-40-1} & {\small resnet20} & {\small resnet20}  & {\small resnet32}  & {\small resnet8x4} & {\small vgg8}  \\\midrule
{\small Teacher}       & 75.61                        & 75.61                        & 72.34                        & 74.31                         & 74.31         & 79.42        & 74.64                     \\
{\small Student}       & 73.26                        & 71.98                        & 69.06                        & 69.06                         & 71.14         & 72.50        & 70.36                     \\\midrule
KD         & 74.92                        & 73.54                        & 70.66                        & 70.67                         & 73.08           &73.33       & 72.98                     \\
CRD        & 75.48                        & 74.14                        & 71.16                        & \textbf{71.46}                & 73.48          & \textbf{75.51}            & 73.94                     \\\midrule
$L_{1.0}$  & 75.42  \negarrow             & 74.16   \posarrow              & 71.32   \posarrow              & 71.39 \negarrow               & 73.57 \posarrow    & 75.50  \negarrow     & 73.60 \negarrow                    \\
$L_{0.1}$  & 75.69   \posarrow            & 74.17   \posarrow               & 71.48    \posarrow            & 71.38 \negarrow               & 73.66  \posarrow & 75.41 \negarrow   & 73.61 \negarrow                    \\\midrule
$J_{1.0}$  & 75.39    \negarrow           & 74.18          \posarrow       & 71.28    \posarrow           & 71.28 \negarrow               & 73.58   \posarrow    & 75.32  \negarrow   & 73.67 \negarrow                    \\
$J_{0.05}$ & 75.64    \posarrow           & \textbf{74.27}   \posarrow      & 71.33    \posarrow            & 71.24 \negarrow               & 73.57  \posarrow     & 75.50  \negarrow    & \textbf{74.01} \posarrow        \\
$J_{0.01}$ & \textbf{75.83}  \posarrow      & 74.24   \posarrow              & \textbf{71.50}   \posarrow       & 71.27 \negarrow               & \textbf{73.90}  \posarrow   & 75.37 \negarrow      & 73.95 \posarrow \\\bottomrule             
\end{tabular}
\end{table}

\begin{table}[htbp]
\centering
\caption{Top-1 \textit{test accuracy} (\%) of students networks on CIFAR100 where the student and teacher networks are from different types. \posarrow \ and \negarrow \ denotes superior and inferior performance relative to CRD. Each result is the mean of 3 random runs. $L_\alpha$ and $J_\alpha$ denote $\alpha$-CPC and $\alpha$-ML-CPC.}
\label{tab:kd-different}
\resizebox{\textwidth}{!}{
\begin{tabular}{l|cccccc}
\toprule
Teacher       & {\small vgg13}       & {\small ResNet50}    & {\small ResNet50} & {\small resnet32x4}   & {\small resnet32x4}   & {\small WRN-40-2}     \\
Student       & {\small MobileNetV2} & {\small MobileNetV2} & {\small vgg8}     & {\small ShuffleNetV1} & {\small ShuffleNetV2} & {\small ShuffleNetV1} \\\midrule
Teacher       & 74.64                           & 79.34                           & 79.34                        & 79.42                            & 79.42                            & 75.61                            \\
Student       & 64.60                           & 64.60                           & 70.36                        & 70.50                            & 71.82                            & 70.50                            \\\midrule
KD            & 67.37                           & 67.35                           & 73.81                        & 74.07                            & 74.45                            & 74.83                            \\
CRD           & \textbf{69.73}                  & 69.11                           & 74.30                        & 75.11                            & 75.65                            & 76.05                            \\\midrule
$L_{1.0}$     & 69.24 \negarrow                          & 69.02  \negarrow                 & 73.66  \negarrow              & 75.00 \negarrow                       & 75.93   \posarrow                  & 75.72   \negarrow                  \\
$L_{0.1}$     & 69.26   \negarrow                & 69.33  \posarrow                & 74.24   \negarrow              & 75.34  \posarrow                  & 76.01   \posarrow                 & 76.12    \posarrow               \\\midrule
$J_{1.0}$  & 68.92   \negarrow                & 68.80   \negarrow                  & 73.65  \negarrow               & 75.39  \posarrow                & 75.88   \posarrow              & 75.70   \negarrow                  \\
$J_{0.05}$ & 69.25 \negarrow                  & \textbf{70.04}  \posarrow         & \textbf{74.84} \posarrow       & 75.51  \posarrow                  & 76.24   \posarrow                & 76.03    \posarrow                \\
$J_{0.01}$ & 69.25   \negarrow                & 69.90  \posarrow                  & 74.81    \posarrow              & \textbf{75.47}  \posarrow          & \textbf{76.04} \posarrow          & \textbf{76.19}   \posarrow       \\\bottomrule 
\end{tabular}
}
\end{table}

\begin{figure}
    \centering
    \includegraphics[width=0.8\textwidth]{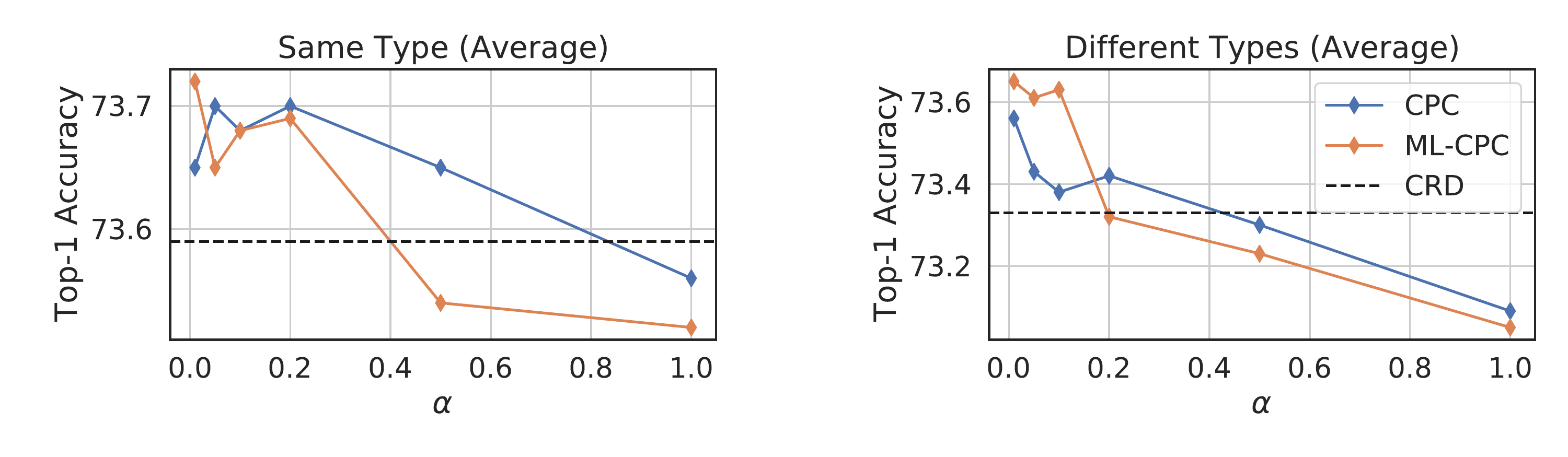}
    \caption{Ablation studies for KD with CPC and ML-CPC at different values of $\alpha$. Left: student and teacher are of the same type. Right: student and teacher are from different types.}
    \label{fig:kd-ablation}
\end{figure}

\vspace{-1em}
\subsection{Representation Learning}
\paragraph{Setup} Finally, we consider ML-CPC for unsupervised representation learning as a replacement to CPC. We follow the experiment procedures in MoCo-v2~\cite{chen2020improved} (which used the CPC objective), where negative samples are obtained from a key encoder that updates more slowly than the representation network. We use the ``linear evaluation protocol'' where the learned representations are evaluated via the test top-1 accuracy when a linear classifier is trained to predict labels from representations. Different from knowledge distillation, we do not have labels and fixed teacher representations, so the problem becomes much more difficult and using small values of $\alpha$ alone will lead to high variance in initial estimates which could hinder the final performance. To this end, we use a curriculum learning \cite{bengio2009curriculum} approach where we select $\alpha$ values from high to low throughout training: higher $\alpha$ has higher bias, lower variance and easier to learn, whereas lower $\alpha$ has lower bias, higher variance and harder to learn. For ML-CPC, we consider 4 types of geometrically decreasing schedules for $\alpha$: fixed at $1.0$; from $2.0$ to $0.5$; from $5.0$ to $2.0$; and from $10.0$ to $0.1$; so $\alpha = 1.0$ for all cases when we reached half of the training epochs. We use the same values for other hyperparameters as those used in the MoCo-v2 CPC baseline (more details in Appendix~\ref{app:exps}).

\vspace{-0.5em}
\paragraph{Results} We show the top-1 accuracy of the learned representations under the linear evaluation protocol in Table~\ref{tab:representation-results} for CIFAR10 and CIFAR100. While the original ML-CPC objective (denoted as $J_{1.0} \to J_{1.0}$) already outperforms the CPC baseline in most cases, we observe that using a curriculum from easy to hard objective has the potential to further improve performance of the representations. Notably, the $J_{10.0} \to J_{0.1}$ schedule improves the performance on both datasets by almost 2.5 percent when trained for 200 epochs, which demonstrates its effectiveness when the number of epochs used during training is limited. 

\begin{table}[htbp]
\centering

\caption{Top-1 accuracy of unsupervised representation learning under the linear evaluation protocol.}
\label{tab:representation-results}
\begin{subtable}{.5\textwidth}
\centering
\caption{CIFAR-10}
\resizebox{\textwidth}{!}{
\begin{tabular}{l|ccc}
\toprule
Epochs & 200                  & 500                  & 1000                 \\\midrule
$L_{1.0}$    & 83.28               & 89.31                & 91.20                 \\\midrule
$J_{1.0} \to J_{1.0}$                           & 83.61      \posarrow          & 89.43    \posarrow            & 91.48 \posarrow  \\
$J_{2.0} \to J_{0.5}$                          & 84.31 \posarrow & 89.47 \posarrow & 91.43  \posarrow              \\
$J_{5.0} \to J_{0.2}$                         & 85.52 \posarrow & \textbf{89.85}  \posarrow              & 91.50 \posarrow\\
$J_{10.0} \to J_{0.1}$                          & \textbf{86.16} \posarrow              & 89.49 \posarrow & \textbf{91.86} \posarrow \\\bottomrule         
\end{tabular}
}

\end{subtable}
~
\begin{subtable}{.5\textwidth}
\centering
\caption{CIFAR-100}
\resizebox{\textwidth}{!}{
\begin{tabular}{l|ccc}
\toprule
Epochs & 200                  & 500                  & 1000                 \\\midrule
$L_{1.0}$    & 61.42                &   67.72       & 69.63                 \\\midrule
$J_{1.0} \to J_{1.0}$                           & 61.80      \posarrow          &    67.68 \negarrow & \textbf{70.85} \posarrow \\
$J_{2.0} \to J_{0.5}$                          & 62.92 \posarrow & 68.01 \posarrow & 70.22 \posarrow               \\
$J_{5.0} \to J_{0.2}$                         & 63.58 \posarrow &    \textbf{68.04} \posarrow     & 70.07 \posarrow \\
$J_{10.0} \to J_{0.1}$                          & \textbf{64.05}  \posarrow     & 67.94 \posarrow & 70.03 \posarrow \\\bottomrule         
\end{tabular}
}
\end{subtable}
\end{table}

In Table~\ref{tab:imagenet}, we include additional results for ImageNet under a compute-constrained scenario, where the representations are trained for only 30 epochs on a ResNet-18 architecture. Similar to the observations in CIFAR-10, we observe improvements in terms of linear classification accuracy of the learned representations. This demonstrates that the curriculum learning approach (specific to ML-CPC with re-weighting schedules, where the objective remains a lower bound to mutual information) could be useful to unsupervised representation learning in general.

\begin{table}
\centering
\caption{ImageNet representation learning for 30 epochs.}
\label{tab:imagenet}
\begin{tabular}{l|c|cccc}
\toprule
Objective & $L_{1.0}$                  & $J_{1.0} \to J_{1.0}$                  & $J_{2.0} \to J_{0.5}$     & $J_{5.0} \to J_{0.2}$ &  $J_{10.0} \to J_{0.1}$        \\\midrule 
Top1 & 43.45 & 43.24 \negarrow & 43.52 \posarrow & \textbf{43.86} \posarrow & 43.81 \posarrow \\
Top5 & 67.42 & 67.43 \posarrow & \textbf{67.82} \posarrow & 67.67 \posarrow & 67.71 \posarrow  \\\bottomrule
\end{tabular}
\end{table}

\vspace{-0.5em}
\section{Conclusion}
\vspace{-0.5em}
In this paper, we proposed multi-label contrastive predictive coding for representation learning, which provides a generalization to contrastive predictive coding via multi-label classification. Re-weighted ML-CPC is able to enjoy less bias while being a lower bound to mutual information. Our upper bounds for the smallest $\alpha$ is close to the theoretical limit~\cite{mcallester2020formal} of any distribution-free high-confidence lower bound on mutual information estimation. 
We demonstrate the effectiveness of ML-CPC on mutual information, knowledge distillation and unsupervised representation learning. 

It would be interesting to further apply this method to other application domains, investigate alternative methods to control the re-weighting procedure (such as using angular margins~\cite{liu2016large}), and develop more principled approaches towards curriculum learning for unsupervised representation learning. From a theoretical standpoint, it is also interesting to formally investigate the bias-variance trade-off of ML-CPC, and see whether simple modifications to ML-CPC based on a slightly different assumption over $p(\vx, \vy)$ could approach the theoretical limit by McAllester and Stratos~\cite{mcallester2020formal}.

\section*{Broader Impact}

Unsupervised representation learning approaches have driven a lot of the recent progresses in many applications such as computer vision~\cite{he2019momentum} and natural language processing~\cite{devlin2018bert}.
However, training effective unsupervised learning models would require vast amounts of growing resources including data, compute and energy. For example, the recent GPT-3~\cite{brown2020language} model with 175B parameters is trained on a dataset with 400B tokens and consumes thousands of Petaflops-s/days. As a result, it becomes ever increasingly difficult for those who does not have access to such resources to compete, leaving much progress in deep unsupervised representation learning at the hands of a few large organizations. 

In order to further democratize AI, it has become crucial to develop efficient methods that can be reproduced by most individuals with low cost, from modeling, training to inference. Our work aims to make a very tiny step in this direction, where we have demonstrated improvements to existing algorithms under the same computational budget constraints. In particular, we are able to significantly improve the representation learning capability of a model under very limited computational budgets. Our method is also useful for other applications where estimating mutual information is involved, such as information bottleneck.

Nevertheless, our method is not agnostic to existing biases in the dataset, so there is a potential danger that any bias that are inherent in the data collection process are also exhibited in the learned representations, such as bias against minority groups~\cite{song2018learning}. Our method also does not consider the potential risks of adversarial examples~\cite{goodfellow2014explaining}, which could be designed to sabotage certain downstream tasks; as well as data poisoning~\cite{koh2018stronger}, which could harm the quality of the learned representations. We encourage researchers to further think about these safety concerns of unsupervised representation learning, since unsupervised data sources are more susceptible to malevolent sources who exploit the shortage of regulators overlooking the data.

\section*{Acknowledgements}
The authors would like to thank David McAllester for suggesting the generalization to KL divergences between any two distributions, Shengjia Zhao for helpful discussions over the ideas, Alessandro Sordoni for identifying a typo in the proof, and the anonymous reviewers for their constructive feedback. This research was supported by NSF (\#1651565, \#1522054, \#1733686), ONR (N00014-19-1-2145), AFOSR (FA9550-19-1-0024), Amazon AWS, and FLI.

\bibliographystyle{plain}
\bibliography{bib}

\newpage
\appendix

\section{Proofs}
\label{app:proofs}

\subsection{Preliminary Lemma and Propositions}
To prove the main results, we need the following Lemma and Propositions~\ref{thm:exchangeable} and \ref{thm:exchangeable-v2}. The Lemma is a special case to the dual representation of $f$-divergences discussed in \cite{nguyen2008estimating}.

\begin{lemma}[Nguyen et al.~\cite{nguyen2008estimating}]
\label{thm:nwj}
$\forall P, Q \in \gP(\gX)$ such that $P \ll Q$,
\begin{align}
    D_{\mathrm{KL}}(P \Vert Q) = \sup_{T \in L^\infty(Q)} \bb{E}_{P}[T] - \bb{E}_{Q}[e^{T}] + 1
    \label{eq:nwj}
\end{align}
\end{lemma}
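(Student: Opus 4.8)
The plan is to recognize the right-hand side as a Fenchel--Young / Legendre-duality statement for the convex function $u \mapsto u\log u$, reducing the whole problem to a one-variable optimization that is then integrated against $Q$. Write $p = \frac{\diff P}{\diff Q}$ for the Radon--Nikodym derivative, which exists since $P \ll Q$, and set $F(T) = \E_P[T] - \E_Q[e^T] + 1$. Using the change of measure $\E_P[T] = \E_Q[p\,T]$, I would rewrite $F(T) = \E_Q[p\,T - e^T + 1]$, so that the functional decouples pointwise and the problem becomes maximizing, for each fixed value $p \geq 0$, the scalar concave map $\phi_p(t) = p\,t - e^t$ over $t \in \R$.

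First I would establish the upper bound $F(T) \leq \KL(P \Vert Q)$ for every $T \in L^\infty(Q)$. The map $\phi_p$ has a unique maximizer at $t^\star = \log p$ (for $p > 0$) with value $\phi_p(\log p) = p\log p - p$; concavity of $\phi_p$ therefore gives the pointwise inequality $p\,T(x) - e^{T(x)} \leq p\log p - p$ for all $x$ (the case $p = 0$ holding trivially with the convention $0\log 0 = 0$). Integrating against $Q$ and using $\E_Q[p] = P(\gX) = 1$ and $\E_Q[p\log p] = \E_P[\log p] = \KL(P \Vert Q)$ yields $F(T) \leq \KL(P \Vert Q)$, hence $\sup_T F(T) \leq \KL(P \Vert Q)$.

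The remaining step is achievability, and this is where the main obstacle lies: the pointwise-optimal choice $T^\star = \log p$ attains $F(T^\star) = \KL(P \Vert Q)$ formally, but $\log p$ need not belong to $L^\infty(Q)$, so it cannot be substituted directly into the supremum. I would therefore approximate $T^\star$ by the truncations $T_M = \max(\min(\log p, M), -M) \in L^\infty(Q)$ and show $F(T_M) \to \KL(P \Vert Q)$ as $M \to \infty$. For the $e^{T_M}$ term, $e^{T_M} \leq p + 1 \in L^1(Q)$ and $e^{T_M} \to p$ pointwise $Q$-a.e., so dominated convergence gives $\E_Q[e^{T_M}] \to \E_Q[p] = 1$. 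For the $\E_P[T_M]$ term I would split $\log p$ into positive and negative parts: the negative part is always $P$-integrable, since $-u\log u$ is bounded by $1/e$ on $(0,1]$, which forces $\E_P[(\log p)_-] \leq 1/e$. Consequently, when $\KL(P \Vert Q) < \infty$ dominated convergence gives $\E_P[T_M] \to \E_P[\log p] = \KL(P \Vert Q)$, while when $\KL(P \Vert Q) = \infty$ monotone convergence on the positive part gives $\E_P[T_M] \to \infty$; in both cases $F(T_M) \to \KL(P \Vert Q)$. Combining the two directions establishes the claimed identity. The only genuinely delicate points are the integrability bookkeeping in this truncation limit and the $\KL = \infty$ case, which the split into positive and negative parts is designed to handle cleanly.
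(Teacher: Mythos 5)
Your proof is correct and follows essentially the same route as the paper: both rest on Fenchel--Young duality for $f(t)=t\log t$ (equivalently, the pointwise maximization of $p\,t - e^{t}$ at $t^\star=\log p$ after the change of measure $\E_P[T]=\E_Q[p\,T]$). The only difference is that the paper's proof is an explicit sketch that delegates the interchange of supremum and expectation to Nguyen et al., whereas you supply that step rigorously via the truncations $T_M$ with dominated/monotone convergence, including the $\KL(P\Vert Q)=\infty$ case --- a welcome completion, not a different argument.
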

\begin{proof} (Sketch)
Please refer to~\cite{nguyen2008estimating} for a more formal proof.

Denote $f(t) = t \log t$ whose convex conjugate is $f^\star(u) = \exp(u - 1)$, we have that
\begin{align}
  D_{\mathrm{KL}}(P \Vert Q) &= \bb{E}_{\vx \sim Q}\left[f\left(\frac{\diff P}{\diff Q}(\vx)\right)\right] = \bb{E}_{\vx \sim Q}\left[\sup_{u} u \cdot \frac{\diff P}{\diff Q}(\vx) - f^\star(u)\right] \\
  & = \sup_{T \in L^\infty(Q)} \bb{E}_{\vx \sim Q}\left[T(\vx) \cdot \frac{\diff P}{\diff Q}(\vx) - f^\star(T(\vx))\right] \\
  & = \sup_{T \in L^\infty(Q)} \bb{E}_{\vx \sim P}[T(\vx)] - \bb{E}_{\vx \sim Q}\left[f^\star(T(\vx))\right] \\
  & = \sup_{T \in L^\infty(Q)} \bb{E}_{\vx \sim P}[T(\vx) + 1] - \bb{E}_{\vx \sim Q}\left[\exp(T(\vx))\right],
\end{align}
which completes the proof.
\end{proof}

\begin{proposition}
\label{thm:exchangeable}
For all positive integers $n \geq 1, m \geq 2$, and for any collection of positive random variables $\{X_{i}\}_{i=1}^{n}$, $\{\overline{X_{i,j}}\}_{j=1}^{m}$ such that $\forall i \in [n]$,
$
    X_{i}, \overline{X_{i,1}}, \overline{X_{i,2}}, \ldots, \overline{X_{i,m-1}}
$ are exchangeable,
then $\forall \alpha \in (0, \frac{2m}{m+1}]$, the following is true:
\begin{align}
    \E\left[\frac1n \sum_{i=1}^n \frac{m X_i}{\alpha X_i + \frac{m-\alpha}{m-1} \sum_{j = 1}^{m-1} \overline{X_{i,j}}} \right] \leq \frac{1}{\alpha}. 
\end{align}
\end{proposition}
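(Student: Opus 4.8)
The plan is to reduce the claim, via linearity of expectation and the exchangeability hypothesis, to a single \emph{deterministic} inequality on the probability simplex, and then to settle that inequality by analysing the convexity of an associated linear-fractional function.

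First, by linearity of expectation it suffices to prove the bound for each $i$ separately, i.e. to show $\E[\frac{m X_i}{\alpha X_i + \frac{m-\alpha}{m-1}\sum_{j=1}^{m-1}\overline{X_{i,j}}}] \le \frac{1}{\alpha}$. Fix $i$ and write $Y_0 = X_i$ and $Y_k = \overline{X_{i,k}}$ for $k \in [m-1]$, so that $Y_0,\ldots,Y_{m-1}$ are exchangeable. The crucial first step is to \emph{symmetrise}: because the $m$ variables are exchangeable, relabelling $Y_0$ and $Y_k$ leaves the expectation unchanged, so the target equals the average over $k$ of the $m$ analogous terms $\E[\frac{mY_k}{\alpha Y_k + \frac{m-\alpha}{m-1}\sum_{j\ne k}Y_j}]$. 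Setting $S = \sum_{k=0}^{m-1}Y_k$ and $t_k = Y_k/S$, the weights $\alpha$ (positive) and $\frac{m-\alpha}{m-1}$ (negative) were chosen exactly so that each denominator is affine in $Y_k$ and $S$; each term then simplifies to $m\,\phi(t_k)$ with $\phi(t) = \frac{(m-1)t}{m(\alpha-1)t + (m-\alpha)}$, and after the $1/m$ averaging factor the target expectation becomes $\E[\sum_{k=0}^{m-1}\phi(t_k)]$, where $\sum_k t_k = 1$.

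It then suffices to prove the pointwise inequality $\sum_{k=0}^{m-1}\phi(t_k) \le \frac{1}{\alpha}$ on the simplex $\{t_k \ge 0,\ \sum_k t_k = 1\}$. I would first verify that $m(\alpha-1)t+(m-\alpha)$ is strictly positive on $[0,1]$ (it is affine and equals $m-\alpha$ and $\alpha(m-1)$ at the endpoints), so $\phi$ is well defined with $\phi(0)=0$ and $\phi(1)=\frac{1}{\alpha}$. Differentiating gives $\phi''(t) = \frac{-2(m-1)\,m(\alpha-1)(m-\alpha)}{(m(\alpha-1)t+(m-\alpha))^3}$, whose sign is opposite to that of $\alpha-1$. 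For $\alpha \le 1$ this makes $\phi$ convex, so $\sum_k \phi(t_k)$ is a convex function on the simplex and attains its maximum at a vertex $e_k$, with value $\phi(1)+(m-1)\phi(0) = \frac{1}{\alpha}$. This settles all $\alpha \in (0,1]$, and shows why symmetrisation is essential: a single term $m\phi(t_k)$ is on its own only bounded by $m/\alpha$, and it is precisely the averaging factor $1/m$ that recovers the correct constant.

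The step I expect to be the main obstacle is the regime $\alpha \in (1, \frac{2m}{m+1}]$. There $\phi$ is \emph{concave}, so $\sum_k \phi(t_k)$ is maximised in the interior rather than at a vertex and the above argument reverses; indeed the symmetric point $t_k = 1/m$ already gives $\sum_k \phi(1/m) = m\phi(1/m) = 1$, which exceeds $\frac{1}{\alpha}$. Before pushing further I would therefore re-examine the extremal all-equal configuration $Y_0 = \cdots = Y_{m-1}$ very carefully, since that is exactly where the constant is decided and where the special endpoint $\frac{2m}{m+1}$ must enter; I anticipate that handling this range either requires a genuinely different, non-vertex extremal analysis or forces a tightening of the admissible interval for $\alpha$, and this is where I would concentrate the effort.
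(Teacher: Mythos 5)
Your reduction is sound, and for $\alpha\in(0,1]$ it constitutes a complete proof by a genuinely different route from the paper's. The paper multiplies through by $\frac{m-1}{m-\alpha}$, expands $\frac{X_i/\Sigma_i}{1-c\,X_i/\Sigma_i}$ with $c=1-\frac{(m-1)\alpha}{m-\alpha}$ as a geometric series, and uses exchangeability only through the moment bound $\E[(X_i/\Sigma_i)^{p+1}]\le 1/m$; you instead symmetrize first and reduce to the deterministic inequality $\sum_k\phi(t_k)\le 1/\alpha$ on the simplex, settled by convexity of $\phi$ when $\alpha\le 1$. Your version avoids the series manipulation entirely and makes the role of the endpoints transparent; the two arguments use exchangeability in essentially the same place.

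The regime $\alpha\in(1,\frac{2m}{m+1}]$ is not an obstacle to be overcome: your computation at the symmetric point is already a refutation of the proposition as stated. Take $X_i=\overline{X_{i,1}}=\cdots=\overline{X_{i,m-1}}=1$ deterministically; this is an admissible exchangeable family, it realizes $t_k=1/m$ with probability one, and the left-hand side equals exactly $1>1/\alpha$. So the $1/\alpha$ bound is false for every $\alpha>1$, and your ``tightening of the admissible interval'' branch is the correct one. The paper's own proof breaks at precisely the same point: substituting $\E[(X_i/\Sigma_i)^{p+1}]\le 1/m$ term by term into $\sum_p \E[(X_i/\Sigma_i)^{p+1}]\,c^p$ only yields an upper bound when $c\ge 0$, i.e.\ $\alpha\le 1$; for $\alpha>1$ the odd-$p$ terms have $c^p<0$ and the inequality reverses. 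What actually survives for $\alpha\ge 1$ --- and is all that the paper's Proposition~2 needs from this range --- is the weaker bound $\le 1$, which your own framework delivers in one line: $\phi$ is concave with $\phi(1/m)=1/m$, so Jensen gives $\sum_k\phi(t_k)\le m\,\phi\bigl(\tfrac1m\bigr)=1$.
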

\begin{proof}
First, for $\alpha \in (0, 2m / (m+1)]$ we have:
\begin{align}
    & n \E\left[\frac{1}{n} \sum_{i=1}^n \frac{m X_i}{\alpha X_i + \frac{m-\alpha}{m-1} \sum_{j=1}^{m-1} \overline{X_{i,j}}} \right] \\
    = \ & \E\left[\sum_{i=1}^n \frac{m \frac{m-1}{m-\alpha} X_i}{\left(\Sigma_i\right) -  \left(1 - \frac{m-1}{m-\alpha} \alpha \right) X_i}\right] \\
    = \ &  \E\left[\sum_{i=1}^n \frac{m \frac{m-1}{m-\alpha}}{\Sigma_i / X_i -  (1 - \frac{m-1}{m-\alpha} \alpha)}\right] \\
    = \ & m \frac{m-1}{m-\alpha} \E\left[\sum_{i=1}^n \sum_{p=0}^\infty \left(\frac{X_i}{\Sigma_i}\right)^{p+1} \left(1 - \frac{m-1}{m-\alpha} \alpha\right)^p \right]  \qquad \text{(Taylor expansion)}\\
    = \ & m \frac{m-1}{m-\alpha} \sum_{i=1}^n \sum_{p=0}^\infty \E\left[\left(\frac{X_i}{\Sigma_i}\right)^{p+1}\right] \left(1 - \frac{m-1}{m-\alpha} \alpha\right)^p \label{eq:taylor-expansion}
\end{align}
where we simplify the notation with $\Sigma_i := X_i + \sum_{j = 1}^{m-1} \overline{X_{i,j}}$. Furthermore, we note that the Taylor series converges because $(1 - \frac{m-1}{m-\alpha} \alpha) \in (-1, 1)$.

Since the random variables are exchangeable, switching the ordering of $X_i, \overline{X_{i,1}}, \ldots, \overline{X_{i,m-1}}$ does not affect the joint distribution, and the summing function is permutation invariant.
Therefore, for all $i \in [n], p \geq 0$,
\begin{align}
    & \E\left[\left(\frac{X_i}{\Sigma_i}\right)^{p+1}\right] = \frac{1}{m} \E\left[ \left(\frac{X_i}{\Sigma_i}\right)^{p+1} + \sum_{j=1}^{m-1} \left(\frac{\overline{X_{i,j}}}{\Sigma_i}\right)^{p+1} \right] \\
    \leq  & \frac{1}{m} \E\left[ \left(\frac{X_i}{\Sigma_i} + \sum_{j=1}^{m-1} \frac{\overline{X_{i,j}}}{\Sigma_i}\right)^{p+1} \right] = \frac{1}{m}
\end{align}
where the last inequality comes from the fact that $\left(X_i + \sum_{j = 1}^{m-1} \overline{X_{i,j}}\right) / \Sigma_i = 1$ and all the random variables are positive. Continuing from \eqref{eq:taylor-expansion}, we have:
\begin{align}
     & n \E\left[\frac{1}{n} \sum_{i=1}^n \frac{m X_i}{\alpha X_i + \frac{m-\alpha}{m-1} \sum_{j=1}^{m-1} \overline{X_{i,j}}} \right] \\
    \leq \ & n m \frac{m-1}{m-\alpha} \sum_{p=0}^\infty \frac{1}{m} \left(1 - \frac{m-1}{m-\alpha} \alpha\right)^p = \frac{m-1}{m-\alpha} \frac{n}{\alpha \frac{m-1}{m-\alpha}} = \frac{n}{\alpha}
\end{align}
Dividing both sides by $n$ completes the proof for $\alpha \in (0, \frac{2m}{m+1}]$. 
\end{proof}

\begin{proposition}
\label{thm:exchangeable-v2}
$\forall n \geq 1, m \geq 2$, and for any collection of positive random variables $\{X_{i}\}_{i=1}^{n}$, $\{\overline{X_{i,j}}\}_{j=1}^{m}$ such that $\forall i \in [n]$,
$
    X_{i}, \overline{X_{i,1}}, \overline{X_{i,2}}, \ldots, \overline{X_{i,m-1}}
$ are exchangeable,
then $\forall \alpha \in [1, \frac{m}{2}]$,
\begin{align}
    \E\left[\frac1n \sum_{i=1}^n \frac{m X_i}{\alpha X_i + \frac{m-\alpha}{m-1} \sum_{j = 1}^{m-1} \overline{X_{i,j}}} \right] \leq 1 
\end{align}
\end{proposition}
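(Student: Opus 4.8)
The plan is to bound the expectation of each of the $n$ summands by $1$ individually; by linearity this immediately gives the claim, and since every summand is bounded above by $m/\alpha$ (the negative contribution to the denominator is nonnegative), all expectations are finite and the manipulations below are justified. Note that the geometric-series argument of Proposition~\ref{thm:exchangeable} does \emph{not} transfer here: the ratio $1-\frac{m-1}{m-\alpha}\alpha$ leaves $(-1,1)$ once $\alpha>\frac{2m}{m+1}$, so the Taylor expansion diverges. Instead I would reduce everything to a single convexity inequality valid on the whole range $\alpha\in[1,m/2]$.

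Fix an index $i$ and set $\Sigma_i := X_i + \sum_{j=1}^{m-1}\overline{X_{i,j}}$. Collecting the $X_i$ terms in the denominator rewrites the summand as
\[
\frac{m X_i}{\alpha X_i + \frac{m-\alpha}{m-1}\sum_{j=1}^{m-1}\overline{X_{i,j}}}
= \frac{m(m-1) X_i}{m(\alpha-1) X_i + (m-\alpha)\Sigma_i}
= m\,\psi\!\left(\frac{X_i}{\Sigma_i}\right),
\]
where $\psi(t) := \frac{(m-1)t}{m(\alpha-1)t + (m-\alpha)}$. Relabel the exchangeable block as $Z_{i,0}=X_i$ and $Z_{i,l}=\overline{X_{i,l}}$ for $l=1,\dots,m-1$, and put $t_l := Z_{i,l}/\Sigma_i$, so that $t_0,\dots,t_{m-1}\geq 0$ and $\sum_{l=0}^{m-1} t_l = 1$. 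Since $\Sigma_i$ is symmetric in the block and the variables are exchangeable, $\E[m\psi(t_0)]=\E[m\psi(t_l)]$ for every $l$, whence
\[
\E\!\left[\frac{m X_i}{\alpha X_i + \frac{m-\alpha}{m-1}\sum_{j=1}^{m-1}\overline{X_{i,j}}}\right]
= \E\!\left[\frac1m\sum_{l=0}^{m-1} m\,\psi(t_l)\right]
= \E\!\left[\sum_{l=0}^{m-1}\psi(t_l)\right].
\]

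The conceptual crux is now to obtain the pointwise bound $\sum_{l=0}^{m-1}\psi(t_l)\leq 1$ whenever $t_l\geq 0$ and $\sum_l t_l = 1$, which I would get from concavity of $\psi$ together with Jensen's inequality applied at the centroid: $\frac1m\sum_l\psi(t_l)\leq \psi\!\left(\frac1m\sum_l t_l\right)=\psi(1/m)$, and a direct computation gives the convenient identity $\psi(1/m)=\frac{(m-1)/m}{(\alpha-1)+(m-\alpha)}=\frac1m$, so that $\sum_l\psi(t_l)\leq m\,\psi(1/m)=1$, with equality exactly when all $t_l$ are equal (i.e.\ $X_i$ and the $\overline{X_{i,j}}$ coincide). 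This reduction is where the argument really differs from Proposition~\ref{thm:exchangeable}.

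The one remaining technical step, and the point to be careful about, is verifying concavity of $\psi$ on $[0,\infty)$ for the relevant $\alpha$. Writing $\psi$ as a Möbius function $\frac{at}{ct+d}$ with $a=m-1$, $c=m(\alpha-1)$, $d=m-\alpha$, differentiating twice gives $\psi''(t) = -\,2(m-1)(m-\alpha)\,m(\alpha-1)\,\big(m(\alpha-1)t+(m-\alpha)\big)^{-3}$. For $\alpha\in[1,m/2]$ (in fact for any $\alpha\in[1,m]$) each factor $(m-1)$, $(m-\alpha)$, $(\alpha-1)$ is nonnegative and the cubed denominator is positive, so $\psi''\leq 0$ and $\psi$ is concave; the same range keeps $m(\alpha-1)t+(m-\alpha)$ strictly positive on $t\in[0,1]$, so $\psi$ is well defined throughout. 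Averaging the resulting per-index bound over $i=1,\dots,n$ then completes the proof.
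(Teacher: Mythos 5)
Your proof is correct, and it takes a genuinely different route from the paper's. The paper proves the range $\alpha\in[1,\frac{2m}{m+1}]$ via Proposition~\ref{thm:exchangeable}, whose argument expands the summand as a geometric (Taylor) series in $\bigl(1-\frac{m-1}{m-\alpha}\alpha\bigr)$, bounds each moment $\mathbb{E}\bigl[(X_i/\Sigma_i)^{p+1}\bigr]\le 1/m$ by exchangeability, and resums; it then extends to $\alpha\in(\frac{2m}{m+1},\frac{m}{2}]$ by averaging over cyclic sub-blocks of $t$ negatives with the HM--AM inequality and re-applying Proposition~\ref{thm:exchangeable} with rescaled parameters $(t,\,t\alpha/m)$. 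You collapse both cases into a single step: rewrite the summand as $m\,\psi(X_i/\Sigma_i)$ with $\psi(t)=\frac{(m-1)t}{m(\alpha-1)t+(m-\alpha)}$, symmetrize by exchangeability to $\mathbb{E}\bigl[\sum_{l}\psi(t_l)\bigr]$ with $\sum_l t_l=1$, and apply concavity of the M\"obius function $\psi$ at the centroid together with the identity $\psi(1/m)=1/m$. Your algebra checks out (the coefficient of $X_i$ after collecting terms is indeed $\frac{m(\alpha-1)}{m-1}$, and $\psi''\le 0$ exactly because $m-1$, $m-\alpha$, and $\alpha-1$ are all nonnegative on the stated range), and the argument needs no convergence condition and no case split; it even establishes the bound on the larger range $\alpha\in[1,m]$. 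What the paper's heavier machinery buys is that Proposition~\ref{thm:exchangeable} also covers $\alpha<1$ with the bound $1/\alpha$ (needed for Theorem~\ref{thm:kl}), a regime where your $\psi$ becomes convex and the Jensen step reverses direction; for the present proposition, where $\alpha\ge 1$, your argument is strictly simpler.
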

\begin{proof}
The case for $\alpha \in [1, \frac{2m}{m+1}]$ is apparent from Proposition~\ref{thm:exchangeable}.

For $\alpha \in (2m/(m+1), m/2]$, we have for all $t \in [m-1]$:
\begin{align}
    & \E\left[\frac1n \sum_{i=1}^n \frac{m X_i}{\alpha X_i + \frac{m-\alpha}{m-1} \sum_{j = 1}^{m-1} \overline{X_{i,j}}} \right] \\
    \leq \ & \E\left[\frac1n \frac{1}{m-1} \sum_{i=1}^n \sum_{j=1}^{m-1} \frac{m X_i}{\alpha X_i + \sum_{k=j}^{j+t-1} \overline{X_{i,k}}} \right] \label{eq:inv} \\
    = \ & \E\left[\frac1n \frac{1}{m-1} \sum_{i=1}^n \sum_{j=1}^{m-1} \frac{t X_i}{\frac{t\alpha}{m} X_i + \frac{t-\frac{t\alpha}{m}}{t-1} \sum_{k=j}^{j+t-1} \overline{X_{i,k}}} \right]
\end{align}
where we define $\overline{X_{i,k}} = \overline{X_{i,k-(m-1)}}$ when $k > (m-1)$ and use the concavity of the inverse function (or equivalently the HM-AM inequality) to establish \eqref{eq:inv}. For any $\alpha \in (2m/(m+1), m/2]$, we can choose $t$ to be any integer from the interval $[\frac{m}{\alpha}, \frac{2m}{\alpha} - 1]$; we note that such an integer always exists because the length of the interval is greater or equal to 1: $$\frac{2m}{\alpha} - 1 - \frac{m}{\alpha} = \frac{m}{\alpha} - 1 \geq 1$$ Then we can apply the result in Proposition~\ref{thm:exchangeable}, for $t$ samples and the new $\alpha$ being $\frac{t\alpha}{m}$; from our construction of $t$, this satisfies the condition in Proposition~\ref{thm:exchangeable} that:
\begin{gather*}
   1 \leq \frac{t\alpha}{m} \leq \frac{2t}{t+1}
\end{gather*}
Therefore we can apply Proposition~\ref{thm:exchangeable} to a valid choice of $t$ to obtain
\begin{align*}
   & \E\left[\frac1n \sum_{i=1}^n \frac{m X_i}{\alpha X_i + \frac{m-\alpha}{m-1} \sum_{j = 1}^{m-1} \overline{X_{i,j}}} \right] \\
   \leq \ & \E\left[\frac1n \frac{1}{m-1} \sum_{i=1}^n \sum_{j=1}^{m-1} \frac{t X_i}{\frac{t\alpha}{m} X_i + \frac{t-\frac{t\alpha}{m}}{t-1} \sum_{k=j}^{j+t-1} \overline{X_{i,k}}} \right] \leq \frac{m}{t \alpha} \leq 1
\end{align*}
which proves the result.
\end{proof}

\subsection{Proof for CPC}
\thmkl*
\begin{proof}
From Lemma~\ref{thm:nwj}, we have that:
\begin{align}
    & \KL(P \Vert Q) \\ \geq & \
    \bb{E}_{\vy_{1:m-1} \sim Q^{m-1}}\left[\bb{E}_{\vx \sim P}\left[\log \frac{m \cdot r(\vx)}{r(\vx) + \sum_{i=1}^{m-1} r(\vy_i) }\right] - \bb{E}_{\vx \sim Q}\left[\frac{m \cdot r(\vx)}{r(\vx) + \sum_{i=1}^{m-1} r(\vy_i) }\right] + 1\right] \nonumber \\
    \geq & \  \bb{E}_{\vy_{1:m-1} \sim Q^{m-1}}\left[\bb{E}_{\vx \sim P}\left[\log \frac{m \cdot r(\vx)}{r(\vx) + \sum_{i=1}^{m-1} r(\vy_i) }\right]\right] - 1 + 1 \\
    = & \ \bb{E}_{\vx \sim P, \vy_{1:m-1} \sim Q^{m-1}}\left[\log \frac{m \cdot r(\vx)}{r(\vx) + \sum_{i=1}^{m-1} r(\vy_i) }\right],
\end{align}
where the second inequality comes from Proposition~\ref{thm:exchangeable} where $\vx \sim Q$ and $\vy_{1:m-1} \sim Q^{m-1}$ are exchangeable, thus proving the statement.
\end{proof}

\subsection{Proof for ML-CPC}

\thmklm*

\begin{proof}
First, we have

\begin{align}
     & \E\Bigg[\frac{1}{n} \sum_{i=1}^n \log{\frac{n m \cdot \rii}{\alpha \sum_{j=1}^n \rjj + \frac{m - \alpha}{m-1} \sum_{j=1}^n \sum_{k=1}^{m-1} \rjk}}\Bigg] \\
    = & \ \E\Bigg[\frac{1}{n} \sum_{i=1}^n \log(n m \cdot \rii) - \log \left(\alpha \sum_{j=1}^n \rjj + \frac{m - \alpha}{m-1} \sum_{j=1}^n \sum_{k=1}^{m-1} \rjk\right)\Bigg] \nonumber \\
    \leq & \ \E\Bigg[\frac{1}{n} \sum_{i=1}^n \log(n m \cdot \rii) - \log \left(n \alpha \rii + \frac{m - \alpha}{m-1} \sum_{j=1}^n \sum_{k=1}^{m-1} \rjk\right)\Bigg] \label{eq:ml-cpc-jensen} \\
    = & \ \E\Bigg[\frac{1}{n} \sum_{i=1}^n \frac{n m \cdot \rii}{n \alpha \rii + \frac{m - \alpha}{m-1} \sum_{j=1}^n \sum_{k=1}^{m-1} \rjk}\Bigg] \\
    \leq & \ \KL(P \Vert Q) - 1 + \E_{\vx_{1:n} \sim Q^n}\Bigg[\frac{1}{n} \sum_{i=1}^n \frac{n m \cdot \rii}{n \alpha \rii + \frac{m - \alpha}{m-1} \sum_{j=1}^n \sum_{k=1}^{m-1} \rjk}\Bigg] \label{eq:ml-cpc-nwj}
\end{align}
where we use Jensen's inequality over $\log$ in \eqref{eq:ml-cpc-jensen} and Lemma~\ref{thm:nwj} in \eqref{eq:ml-cpc-nwj}. 

Since $\vx_i \sim Q$ and all the $\vy_{j,k}$ are $(n (m-1) + 1)$ exchangeable random variables, and $$m \geq 2, \alpha \in \left[\frac{m}{n(m-1) + 1}, 1\right] \quad \Rightarrow \quad \frac{n(m-1) + 1}{m} \alpha \in \left[1, \frac{n(m-1) + 1}{2}\right],$$ we can apply Proposition~\ref{thm:exchangeable-v2} to the $(n (m-1) + 1)$ exchangeable variables
\begin{align}
    &\E_{\vx_{1:n} \sim Q^n}\Bigg[\frac{1}{n} \sum_{i=1}^n \frac{n m \cdot \rii}{n \alpha \rii + \frac{m - \alpha}{m-1} \sum_{j=1}^n \sum_{k=1}^{m-1} \rjk}\Bigg] \nonumber \\
    = \ & \E_{\vx_{1:n} \sim Q^n}\Bigg[\frac{1}{n} \sum_{i=1}^n \frac{(n (m-1) + 1) \cdot \rii}{\frac{n(m-1) + 1}{m} \alpha \rii + \frac{m - \frac{n(m-1) + 1}{n m} \alpha}{m-1} \sum_{j=1}^n \sum_{k=1}^{m-1} \rjk}\Bigg] \leq 1 \nonumber
\end{align}
Combining the above with \eqref{eq:ml-cpc-nwj}, proves the result for the given range of $\alpha$.
\end{proof}

\subsection{Time complexity of gradient calculation in ML-CPC}
Suppose $g$ is a neural network parametrized by $\theta$, then the gradient to the ML-CPC objective is
\begin{align}
    \nabla_\theta J(g_\theta) = \E\Bigg[\frac{1}{n} \sum_{i=1}^n \frac{\nabla_\theta g_\theta(\vx_i, \vy_i)}{g_\theta(\vx_i, \vy_i)} - \frac{\sum_{j=1}^n \nabla_\theta g_\theta(\vx_j, \vy_j) + \sum_{j=1}^n \sum_{k=1}^{m-1} \nabla_\theta g_\theta(\vx_j, \overline{\vy_{j,k}})}{\sum_{j=1}^n \fjj + \sum_{j=1}^n \sum_{k=1}^{m-1} \fjk}\Bigg] \nonumber
\end{align}
Computing the gradient through the an empirical estimate of $J(g_\theta)$ requires us to perform back-propagation through all $nm$ critic evaluations, which is identical to the amount of back-propagation passes needed for CPC. So the time complexity to compute the ML-CPC gradient is $\gO(nm)$.

\section{Pseudo-code and PyTorch implementation to ML-CPC}
\label{app:pseudocode}

We include a PyTorch implementation to $\alpha$-ML-CPC as follows.
\begin{verbatim}
def ml_cpc(logits, alpha):
    """
    We assume that logits are of shape (n, m),
    and the predictions over positive are logits[:, 0].
    Alternatively, one can use kl_div() to ensure that the loss is non-negative.
    """
    n, m = logits.size(0), logits.size(1)
    beta = (m - alpha) / (m - 1)
    pos = logits.select(1, 0)
    neg = logits.narrow(1, 1, m)
    denom = torch.cat([pos + torch.log(torch.tensor(alpha)).float(), 
                      neg + torch.log(torch.tensor(beta)).float()], dim=1)
    denom = denom.logsumexp(dim=1).logsumexp(dim=0)
    loss = denom - pos.sum()
    return loss / n
\end{verbatim}

To ensure that the loss value is non-negative, one can alternatively use the \texttt{kl\_div()} function that evaluates the KL divergence between the predicted label distribution with a ground truth label distribution. This is equivalent to the negative of the $\alpha$-ML-CPC objective shifted by a constant.
We describe this idea in the following algorithm.
\begin{algorithm}[h]
\caption{Pseudo-code for $\alpha$-ML-CPC}
\label{alg:kl-wgan}
\begin{algorithmic}[1]
\STATE \textbf{Input}: the critic $g$, input values $\vx_i$, $\vy_i$, $\overline{\vy_{j,k}}$
\STATE \textbf{Output}: shifted negative objective value $J_\alpha(g)$ for optimization
\STATE Compute logit values $\ell_{i,i} = \log \fii + \log \alpha$ and $\overline{\ell_{j,k}} = \log \fjk + \log \frac{m-\alpha}{m-1}$.
\STATE Compute the normalization value $Z = \sum_{i} \exp(\ell_{i,i}) + \sum_{j,k} \exp(\overline{\ell_{j,k}})$.
\STATE Compute the predicted probabilities $p'_{i,i} = \exp(\ell_{i,i}) / Z$, $\overline{p'_{j,k}} = \exp(\overline{\ell_{j,k}}) / Z$
\STATE Assign the ground truth probabilities $p_{i,i} = 1/n$, $\overline{p_{j,k}} = 0$.
\STATE Compute the KL divergence between $p$ and $p'$.
\end{algorithmic}
\end{algorithm}

\section{Experimental Details}
\label{app:exps}

\subsection{Binary simulation experiments}
Let $X, Y$ be two binary r.v.s such that $\Pr(X=1, Y=1) = p$, $\Pr(X=0, Y=0) = 1-p$. We can simulate the case of a batch size of $n$ with $n-1$ negative samples. For the example of CPC, we have:
\begin{align}
L(g) := \mathop{\E}\Bigg[\frac{1}{n} \sum_{i=1}^n \log{\frac{n \cdot \fii}{\sum_{j=1}^{n} g(\vx_i, \vy_j)}}\Bigg]
\end{align}
Since we are drawing from the above distribution, $\vx_i = \vy_i$ is always true; therefore, we only need to enumerate how many $\vy_j$ are different from $\vx_i$ in order to compute one term of the expectation. In the case where we have $t$ pairs of $(1, 1)$ and $(n-t)$ pairs of $(0, 0)$, then for $g(1, 1) = g(0, 0) = 1$, $g(0, 1) = g(1, 0) = 0$ we have that:
\begin{align}
    \frac{1}{n} \sum_{i=1}^n \log{\frac{n \cdot \fii}{\sum_{j=1}^{n} g(\vx_i, \vy_j)}} = \frac{1}{n} \left(t \log \frac{n}{t} + (n-t) \log \frac{n}{n-t} \right)
\end{align}
Moreover the probability of such an arrangement can be computed from the Binomial distribution 
\begin{align}
   \Pr(t \text{ pairs of } (1, 1)) = \binom{n}{t} p^t (1-p)^{n-t} 
\end{align}
Therefore, we can compute the expectation that is $L(g)$ in closed form by computing the sum for $t$ from $0$ to $n$. We can apply a similar argument to computing the mean of ML-CPC values as well as the variance of the empirical estimates. This allows us to analytically compute the optimal value of the objective values, which allows us to perform direct comparisons over them.

\subsection{Mutual information estimation}
The general procedure follows that in \cite{poole2019on} and \cite{song2019understanding}.

\textbf{Tasks} We sample each dimension of $(\vx, \vy)$ independently from a correlated Gaussian with mean $0$ and correlation of $\rho$, where $\gX = \gY = \R^{20}$. The true mutual information is computed as:
$
    I(\vx, \vy) = - \frac{d}{2} \log \left(1 - \frac{\rho}{2}\right)
$
The initial mutual information is $2$, and we increase the mutual information by $2$ every $4k$ iterations.

\textbf{Architecture and training procedure} We consider two types of architectures -- \textit{joint} and \textit{separable}. The \textit{joint} architecture concatenates the inputs $\vx, \vy$, and then passes through a two layer MLP with 256 neurons in each layer with ReLU activations at each layer. The \textit{separaable architecture} learns two separate neural networks for $\vx$ and $\vy$ (denoted as $g(\vx)$ and $h(\vy)$) and predicts $g(\vx)^\top h(\vy)$; $g$ and $h$ are two neural networks, each is a two layer MLP with 256 neurons in each layer with ReLU activations at each layer; the output of $g$ and $h$ are 32 dimensions. For all the cases, we use with the Adam optimizer~\cite{kingma2014adam} with learning rate $1 \times 10^{-3}$ and $\beta_1 = 0.9, \beta_2 = 0.999$ and train for $20k$ iterations with a batch size of $128$.


\subsection{Knowledge distillation}
The general procedure follows that in \cite{tian2019contrastive}, where we use the same training hyperparameters. Specifically, we train for 240 epochs with the SGD optimizer with a momentum of $0.9$ and weight decay of $5 \times 10^{-4}$. We use a default initial learning rate of $0.1$, and divide the learning rate by 10 at 150, 180 and 210 epochs. We use $16384$ negative samples per positive sample \footnote{We note that this is smaller than what is used in \cite{tian2019contrastive}, and it is possible to achieve additional (though not much) improvements by using more negative samples.}, and a temperature of $0.07$ for the critic. We did not additiaonlly include the knowledge distillation loss to reduce potential compounding effects over the representation learning performance.

\subsection{Unsupervised representation learning}
For CIFAR10, the general procedure follows that of MoCo-v2 \cite{he2019momentum,chen2020improved}, with some slight changes adapted to CIFAR-10. First, we use the standard ResNet50 adaptation of $3\times 3$ kernels instead of $7\times 7$ kernels used for the larger resolution ImageNet, with representation learning dimension of 2048. Next, we use a temperature of $\tau = 0.07$, a batch size of $256$ and a learning rate of $0.3$ for the representation learner, and a learning rate of $3$ for the linear classifier; we observe that these hyperparameters combinations is able to achieve higher performance on the CPC objective for CIFAR-10, so we use these for all our other experiments. The remaining hyperparameters are identical to the ImageNet setup for MoCo-v2. 
For ImageNet, we use the same procedure as that of MoCo-v2, except that we trained the representations for merely 30 epochs with a ResNet-18 network, instead of training on ResNet-50 for 800 epochs.

\end{document}